\documentclass[11pt,a4paper]{article}

\usepackage[T1]{fontenc}
\usepackage[utf8]{inputenc}
\usepackage{amsmath,amssymb,amsthm}
\usepackage{mathtools}
\usepackage[margin=2.6cm]{geometry}
\usepackage{booktabs}
\usepackage{enumitem}
\usepackage{microtype}
\usepackage{xcolor}
\usepackage[colorlinks=true,linkcolor=blue,citecolor=blue,urlcolor=blue]{hyperref}
\usepackage[capitalise,noabbrev]{cleveref}
\crefname{observation}{Observation}{Observations}
\Crefname{observation}{Observation}{Observations}

\theoremstyle{plain}
\newtheorem{theorem}{Theorem}
\newtheorem{lemma}[theorem]{Lemma}
\newtheorem{proposition}[theorem]{Proposition}
\newtheorem{corollary}[theorem]{Corollary}
\newtheorem{observation}[theorem]{Observation}
\theoremstyle{definition}

\theoremstyle{remark}
\newtheorem{remark}[theorem]{Remark}

\newcommand{\E}{\mathbb{E}}
\newcommand{\Prb}{\Pr}
\newcommand{\Cov}{\operatorname{Cov}}
\newcommand{\Inf}{\operatorname{Inf}}
\newcommand{\dd}{\,\mathrm{d}}
\newcommand{\OM}{\textsc{OneMax}}
\newcommand{\BV}{\textsc{BinVal}}
\newcommand{\pc}{\partial_c}

\title{Signed Sensitivity of Expected Hitting Time to Mutation Rate in the (1+1)~EA:\\
Per-State Sign Theorems and Verifiable Certificates for Non-Lumpable Families}
\author{Renkai Wang\\
Institute of Automation, Chinese Academy of Sciences (CASIA)\\
University of Chinese Academy of Sciences (UCAS)}
\date{}

\begin{document}
\maketitle

\begin{abstract}
For the $(1+1)$ evolutionary algorithm with standard bit mutation, we study the
sensitivity of the expected hitting time $H_p=\E_x T$ to the mutation rate~$p$.
We first point out an easily overlooked formalization pitfall: the improvement
event is \emph{not} monotone in the mutation mask, so the unsigned
(total-influence) form of the Margulis--Russo formula does not apply; the
correct object is the signed endpoint difference
(\cref{thm:mr,prop:bridge}). Second, we give an exact three-dimensional
separation: two fitness functions share the \emph{entire} one-step success-rate
$p$-curve, yet their expected hitting times are two different exact rational
numbers (\cref{prop:separation}); hence the whole one-step success-rate
$p$-curve together with the global change probability does not determine the
expected hitting time. Building on the runtime derivative
$H'_p=(I-Q_p)^{-1}Q'_p H_p$, we construct computable double-residual sign
certificates (\cref{thm:block}), prove a per-initial-state sign theorem on
$\OM$: for every non-optimal initial state $\pc H<0$ on $0<c<1$ (where $p=c/n$),
and at $c=1$ only the distance-one state is stationary (\cref{thm:onemax}),
and extend the framework to non-lumpable positive linear families: an explicit
witness of non-lumpability, a block-interval double-residual certificate that
covers all states without enumerating them, a uniform sign bound
$\pc\E T\le -9n/16$ over the whole interval $c\in[1/4,\tfrac12]$ for an
explicit family at all even scales $n\ge 8$ (\cref{thm:uniform}), and a
heterogeneous instance certificate $H'_x\le-\tfrac16$ on $57$ of $63$ states
across $c=1$ (\cref{thm:hetero}). All finite verifications use exact rational
arithmetic; to our knowledge, verified derivative-sign certificates of this
kind have not previously been applied systematically to the runtime analysis
of evolutionary algorithms. A bounded systematic literature search did not
uncover this exact combination, although the underlying tools are well
established; we therefore make no novelty claim beyond the stated
combination. Relations to prior work, including the closest precedents, are
delineated in \cref{sec:related}.
\end{abstract}

\section{Introduction}\label{sec:intro}

Parameter control---choosing and adjusting the parameters of a randomized
search heuristic---is a central theme in the theory of evolutionary
computation; benchmark studies calibrate learned controllers against known
(near-)optimal policies~\cite{biedenkapp2022,chen2023}, and analytic frameworks
for expected computation time have been developed for two
decades~\cite{heYao2003}. At its core lies a deceptively simple question:
\emph{how does the expected hitting time of the $(1+1)$ EA depend on the
mutation rate?}

Prior knowledge on this question rests on two fixed points. On the one hand,
for the classical static rate $p=1/n$, per-initial-state exact analyses exist:
Hwang et al.~\cite{hwang2018} derive exact rational-function expected hitting
times for each initial state on $\OM$, together with full asymptotic
expansions. On the other hand, the per-$n$ numerical search for optimal rates
has been carried out by Chicano et al.~\cite{cswa2015}, who compute exact
expected running times (in floating point, for the lumpable family $\OM$),
differentiate with respect to $p$, and locate the numerically optimal mutation
rate for each $n$. Between these two fixed points lies a gap: a rigorous,
per-initial-state analysis of the expected hitting time as a \emph{continuous
function} of the mutation rate, covering finite $n$, exact derivatives, and
families whose state space cannot be lumped.

\paragraph{Contributions.} This paper gives, to our knowledge, the first
rigorous per-initial-state, finite-$n$ sign analysis of the expected hitting
time $H$ of the $(1+1)$ EA as a function of the mutation rate. The
contribution has three layers; in each case the word ``first'' refers to the
intersection of the indicated tools with runtime analysis, not to the
underlying formulas or phenomena themselves.

\begin{enumerate}[label=(\roman*)]
\item \emph{Signed Margulis--Russo analysis in runtime analysis.}
  We bring Margulis--Russo-type signed influence analysis to runtime analysis
  (first use of this tool family in the field; the identity itself is textbook
  material~\cite{odonnell2014,margulis1974,russo1981}). We point out an easily
  overlooked formalization pitfall---the improvement event is not monotone in
  the mutation mask, so the unsigned bridge fails---give the minimal
  counterexample ($n=2$), and state the correct signed, per-state replacement
  (\cref{prop:bridge,cor:replacement}).
\item \emph{Double-residual sign certificates and the $\OM$ per-state sign
  theorem.} From the hitting-time derivative $U=H'$ we build a
  double-residual certification scheme and prove that on $\OM$,
  $\pc H_k<0$ for every distance level $k\ge1$ and every $c\in(0,1)$, while
  $c=1$ is a stationary point only for the distance-one initial state
  (\cref{thm:onemax}). Under uniform random initialization this yields, by
  \cref{cor:randominit}, a rigorous finite-$n$ statement that the optimal
  static rate is strictly larger than one---distinct from one-step
  criteria~\cite{muehlenbein1992,baeck1993} and from numerically computed
  dynamic rates~\cite{buskulicDoerr2021,buzdalovDoerr2020}.
\item \emph{Lumpability-explicit certificates for non-lumpable families.}
  We give the first lumpability-explicit treatment of runtime sensitivity:
  an explicit witness that the natural block partition is non-lumpable
  (\cref{prop:nonlump}), a block-interval double-residual certificate that
  bounds the residuals of \emph{all} states without enumerating them
  (\cref{thm:block}), the first derivative sign bound uniform over a whole
  parameter interval and all even scales ($\pc\E T\le-9n/16$ for
  $c\in[1/4,\tfrac12]$, \cref{thm:uniform}), and a finite-instance certificate
  across $c=1$ on a rational parameter cell (\cref{thm:hetero}). All finite checks use exact rational arithmetic; to our knowledge,
  verified derivative-sign certificates of this kind have not previously been
  applied systematically to the runtime analysis of evolutionary algorithms.
\end{enumerate}

\begin{table}[t]
\centering
\caption{Overview of the main results. $H_x$: expected hitting time from state
$x$; $U_x=\pc H_x$; $p=c/n$.}\label{tab:overview}
\small
\begin{tabular}{@{}l p{2.9cm} p{2.7cm} p{3.6cm} p{3.1cm}@{}}
\toprule
Result & Object & Range & Conclusion & Verification \\
\midrule
\cref{thm:onemax} & $\OM$, per level $k\ge1$ & $c\in(0,1)$, all $n$ & $U_k<0$ & exact algebra, hand proof \\
\cref{thm:onemax} & $\OM$, $k=1$ & $c>0$, all $n$ & $U_1(1)=0$ stationary point & closed form \\
\cref{thm:block} & positive linear family, all states & rational cell & sign of $U_x$ via residual envelope & exact rational arithmetic \\
\cref{thm:uniform} & explicit non-lumpable family, all $x$ & $c\in[1/4,\tfrac12]$, all even $n\ge8$ & $U_x\le-\tfrac{9n}{16}$ & exact algebra, hand proof \\
\cref{thm:hetero} & $n=6$ heterogeneous instance, $57/63$ states & $c\in[\tfrac{1023}{1024},\tfrac{1025}{1024}]$ & $U_x\le-\tfrac16$ & exact rational arithmetic, two paths \\
\bottomrule
\end{tabular}
\end{table}

\cref{tab:overview} summarizes the main results. \cref{sec:notation} fixes
notation and states the differentiation conventions. \cref{sec:mr} presents
the signed Margulis--Russo identity, the monotonicity pitfall, and the exact
separation of one-step quantities from running time. \cref{sec:onemax} proves
the $\OM$ per-state sign theorem. \cref{sec:block} develops block-interval
certificates for non-lumpable families. \cref{sec:uniform,sec:hetero} give the
uniform whole-interval bound and the heterogeneous instance certificate.
\cref{sec:related} delineates relations to prior work,
\cref{sec:verification} describes the verification methodology, and
\cref{sec:conclusion} concludes with open problems.

\section{Preliminaries and Notation}\label{sec:notation}

Let $f:\{0,1\}^n\to\mathbb{R}$ be the fitness function. The mutation mask $Z$
has independent coordinates, $Z_i\sim\mathrm{Bern}(p)$, and we write
$K=|Z|$ for the number of flipped bits; the offspring is $Y=x\oplus Z$ (bitwise
XOR). We write $\E_p$ for expectation under the $p$-biased product measure.
The $(1+1)$ EA accepts an offspring whose fitness is not smaller than the
parent's (plus-elitism). For a coordinate $i$ and $z\in\{0,1\}^n$,
$z^{i\leftarrow b}$ denotes $z$ with coordinate $i$ set to $b$; the
\emph{signed endpoint difference} is
$\Delta_i g(z_{-i})=g(z^{i\leftarrow1})-g(z^{i\leftarrow0})$, and
$\E_{p,-i}$ averages over the remaining coordinates only. For Boolean $g$, the
(unsigned) influence is
$\Inf_i^{(p)}(g)=\Prb_{p,-i}(\Delta_i g\ne0)$.

The target $f$, its optimum set $A$, and the acceptance rule are fixed
(independent of $p$), and $A$ is absorbing. Let $Q_p$ be the transition
submatrix on non-optimal states (rejection self-loops included); we assume
absorption is almost sure. The expected hitting time is
$H_p=(I-Q_p)^{-1}\mathbf 1$; we write $D=\partial_cQ_p$ for the entrywise derivative of
$Q_p$ with respect to $c$, and $U=\partial_cH_p$ for the derivative of the
hitting time (also with respect to $c$). For a positive
weight vector $W>0$, set $\|a\|_W=\max_x |a(x)|/W(x)$, and let $\|M\|_W$ be
the induced matrix norm; if $QW\le(1-\alpha)W$ with $0<\alpha\le1$, then
$\|N\|_W\le 1/\alpha$ for the fundamental matrix $N=(I-Q)^{-1}$.

\paragraph{Differentiation convention.} Throughout, all derivatives are
taken with respect to the normalized rate $c$, where $p=c/n$; hence
$\frac{\dd}{\dd c}=\frac1n\frac{\dd}{\dd p}$, and the matrix
$D=\partial_cQ_p$ denotes the $c$-derivative of the transition kernel. All
formulas in the body of the paper are stated under this convention, so no
further rescaling is needed when reading the statements.

\section{Signed Margulis--Russo and the Monotonicity Pitfall}\label{sec:mr}

\subsection{The signed Margulis--Russo identity}

\begin{theorem}[{Signed Margulis--Russo identity; textbook, cf.~\cite[\S 8.4]{odonnell2014}, \cite{margulis1974,russo1981}}]\label{thm:mr}
If $g:\{0,1\}^n\to\mathbb{R}$ does not depend on $p$, then
\[
\frac{\dd}{\dd p}\,\E_p[g]=\sum_{i=1}^n \E_{p,-i}[\Delta_i g]
=\frac{\Cov_p(g(Z),K)}{p(1-p)} .
\]
For Boolean $g$, writing $I_i^{\pm}=\Prb_{p,-i}(\Delta_i g=\pm1)$, this reads
$(\E_p g)'=\sum_i (I_i^+-I_i^-)$.
\end{theorem}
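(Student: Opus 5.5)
The plan is to differentiate the finite sum $\E_p[g]=\sum_{z}g(z)\,p^{|z|}(1-p)^{n-|z|}$ directly. Since $g$ does not depend on $p$, the derivative only acts on the product weight $\mu_p(z)=\prod_i p^{z_i}(1-p)^{1-z_i}$. Because the sum is finite and $\mu_p$ is a polynomial in $p$, no interchange-of-limits issue arises.

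For the first equality, apply the product rule to $\mu_p$ one coordinate at a time. The factor for coordinate $i$ has derivative $+1$ if $z_i=1$ and $-1$ if $z_i=0$, and the remaining factors form the $(n-1)$-dimensional product measure $\mu_{p,-i}(z_{-i})$. Hence
\[
\frac{\dd}{\dd p}\E_p[g]=\sum_{i=1}^n\sum_{z_{-i}}\mu_{p,-i}(z_{-i})\bigl(g(z^{i\leftarrow1})-g(z^{i\leftarrow0})\bigr)=\sum_i\E_{p,-i}[\Delta_i g].
\]

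For the covariance form, use the logarithmic derivative instead: $\frac{\dd}{\dd p}\log\mu_p(z)=\frac{|z|}{p}-\frac{n-|z|}{1-p}=\frac{|z|-np}{p(1-p)}$. This gives $(\E_p g)'=\E_p[g(Z)(K-np)]/(p(1-p))$. Since $\E_pK=np$, the numerator equals $\Cov_p(g(Z),K)$. This step needs $0<p<1$, which I would state as the standing range. At the endpoints the first form still holds by polynomial continuity.

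For Boolean $g$, $\Delta_i g$ takes values in $\{-1,0,1\}$, so $\E_{p,-i}[\Delta_i g]=\Prb_{p,-i}(\Delta_ig=1)-\Prb_{p,-i}(\Delta_ig=-1)=I_i^+-I_i^-$, and summing over $i$ gives the last display. The only point needing care is bookkeeping. One must write $\mu_p(z)=p^{z_i}(1-p)^{1-z_i}\mu_{p,-i}(z_{-i})$ and regroup the sum over $z$ as a sum over $z_{-i}$ and $z_i\in\{0,1\}$. Beyond that I expect no real obstacle. I would also remark that the unsigned total influence $\sum_i\Inf_i^{(p)}(g)$ equals $\sum_i(I_i^++I_i^-)$, which coincides with the derivative only when every $I_i^-=0$, i.e.\ for monotone $g$. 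This anticipates the pitfall discussed next.
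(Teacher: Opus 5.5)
Your proof is correct and follows essentially the same route as the paper. The paper obtains the first equality by introducing independent per-coordinate parameters $p_1,\dots,p_n$, noting the resulting multilinear polynomial is affine in each $p_i$, and applying the chain rule along the diagonal, which is exactly your coordinate-by-coordinate product rule. The covariance form is derived by the same logarithmic-derivative computation together with $\E_pK=np$.
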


\begin{proof}
Introduce independent parameters $\boldsymbol p$ and set
$G(\boldsymbol p)=\sum_z g(z)\prod_j p_j^{z_j}(1-p_j)^{1-z_j}$; $G$ is affine
in each $p_i$, so $\partial_i G=\E_{\boldsymbol p,-i}[\Delta_i g]$, and the
chain rule along the diagonal $\boldsymbol p=(p,\dots,p)$ gives the first
equality. For the second, $\frac{\dd}{\dd p}\mu_p(z)
=\mu_p(z)\,\frac{|z|-np}{p(1-p)}$; summing term by term and using
$\E_p K=np$ gives the covariance form. All sums are finite, so termwise
differentiation needs no interchange of limits.
\end{proof}

\subsection{The monotonicity pitfall}

One might hope to relate $s_x'(p)=\frac{\dd}{\dd p}\Prb[A_x]$ (where
$A_x=\{f(x\oplus Z)>f(x)\}$ is the improvement event) to the unsigned total
influence $\sum_i\Inf_i^{(p)}$ of the improvement indicator. This bridge
requires monotonicity of $A_x$ in the mask order---which fails.

\begin{proposition}[Failure of the unsigned bridge; minimal
counterexample]\label{prop:bridge}
There exist a coordinate-wise non-decreasing $f$ and a parent $x$ such that
$A_x=\{f(x\oplus Z)>f(x)\}$ is \emph{not} monotone, the unsigned total
influence is constant, yet $s_x'(p)$ changes sign. The minimal dimension is
$n=2$.
\end{proposition}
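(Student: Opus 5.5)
We build the counterexample by hand in dimension $n=2$ and then rule out $n=1$; \cref{thm:mr} is used only to read off $s_x'(p)$ and to double-check it against the endpoint differences. The natural candidate is the weighted fitness $f(00)=0$, $f(01)=0$, $f(10)=1$, $f(11)=2$, where bits are written $x_1x_2$. This is $f(x)=2x_1x_2+x_1(1-x_2)$, i.e.\ $f(x)=x_1+x_1x_2$. It is coordinate-wise non-decreasing because $f(00)\le f(01)\le f(11)$ and $f(00)\le f(10)\le f(11)$. We take the parent $x=10$, so $f(x)=1$.

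First, enumerate the four masks. The mask $z=00$ gives offspring $10$, which is no improvement. The mask $z=10$ gives $00$, no improvement. The mask $z=01$ gives $11$ with $f=2>1$, an improvement. The mask $z=11$ gives $01$ with $f=0$, no improvement. Hence the improvement indicator is
\[
g(z)=\mathbf 1[z=01]=(1-z_1)z_2 .
\]
It is not monotone in the mask order, since $01\le 11$ but $g(01)=1>g(11)=0$.

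Second, compute the signed endpoint differences. They are $\Delta_1 g=-z_2$ and $\Delta_2 g=1-z_1$. The unsigned influences are therefore
\[
\Inf_1^{(p)}=\Prb(z_2=1)=p,\qquad \Inf_2^{(p)}=\Prb(z_1=0)=1-p,
\]
so the total influence is identically $1$ and does not depend on $p$. Third, the success probability is $s_x(p)=p(1-p)$, and $s_x'(p)=1-2p$. This agrees with the signed identity of \cref{thm:mr}, which gives $I_2^+-I_1^-=(1-p)-p$. The derivative is positive for $p<\tfrac12$, vanishes at $p=\tfrac12$, and is negative for $p>\tfrac12$, so it changes sign while the unsigned total influence stays constant. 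This is exactly the failure of the unsigned bridge.

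Finally, we prove minimality, i.e.\ that no example exists for $n=1$. For $n=1$, every event $A_x\subseteq\{0,1\}$ has indicator in $\{0,\,z,\,1-z,\,1\}$. The first two are monotone, which excludes them. In every case $s_x(p)$ is affine in $p$, so $s_x'(p)$ is constant and cannot change sign. The only point needing care is that $f$ be non-decreasing while the improvement is strict, so that the mask $11$ fails to improve. The choice $f(01)=f(00)$ handles this; the tempting choice $f=\OM$ makes $A_x$ empty at the relevant mask and would not work.
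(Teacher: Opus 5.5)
Your $n=2$ construction is correct. It produces exactly the paper's improvement indicator $g=(1-z_1)z_2$, with the same influences $p$ and $1-p$ and the same $s_x'(p)=1-2p$; the only change is the choice of $f$.

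Your closing remark that $f=\OM$ ``would not work'' is false, and $\OM$ is precisely the paper's choice. With $x=10$, the mask $11$ produces offspring $01$, whose $\OM$-value is $1=f(x)$. This is not a \emph{strict} improvement, so $11\notin A_x$ already under $\OM$. Strictness of the improvement handles the tie, so your modification $f(01)=0$ is unnecessary. This error is not load-bearing, since your own $f$ is valid.

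For minimality you take a different route from the paper.
\begin{itemize}
\item The paper argues that for $n=1$ the improvement event of a monotone target is either $\{z=1\}$ or empty. It is therefore monotone, and the non-monotonicity requirement fails.
\item You note that $s_x(p)=(1-p)\,g(0)+p\,g(1)$ is affine in $p$. Hence $s_x'$ is constant, and the sign-change requirement fails.
\end{itemize}
Your argument is more robust: it uses nothing about $f$, monotonicity, or the acceptance rule. The paper's argument yields the structurally stronger fact that non-monotonicity of $A_x$ itself first appears at $n=2$.

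Your sentence about which of $\{0,z,1-z,1\}$ are monotone is imprecise. The constant $1$ is monotone too, and for non-decreasing $f$ with strict improvement only $0$ and $z$ can actually occur. Because the affine argument covers all four cases, this does no harm.
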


\begin{proof}
Take $\OM$, $n=2$, $x=(1,0)$: strict improvement occurs iff $z=(0,1)$, so
$g_x(z)=(1-z_1)z_2$, $s_x(p)=p(1-p)$, and $s_x'(p)=1-2p$. The masks
$(0,1)\le(1,1)$ have opposite success status, so the event is not increasing.
The influences are $\Inf_1=p$ and $\Inf_2=1-p$, with constant sum $1$.
Minimality: for $n=1$ the improvement event of a strictly increasing target is
``$z=1$'' or empty, both monotone.
\end{proof}

\begin{corollary}[The correct replacement for \cref{prop:bridge}]
\label{cor:replacement}
If $f$ is coordinate-wise non-decreasing and
$g_x(z)=\mathbf 1\{f(x\oplus z)>f(x)\}$, then the sign of $\Delta_i g_x$ is
determined by $(1-2x_i)$, and hence
\[
s_x'(p)=\sum_{i:\,x_i=0}\Inf_i^{(p)}(g_x)\;-\;\sum_{i:\,x_i=1}\Inf_i^{(p)}(g_x).
\]
\end{corollary}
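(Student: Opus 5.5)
The argument has two parts. First we show that for a coordinate-wise non-decreasing $f$ the signed endpoint difference $\Delta_i g_x$ never takes the "wrong" sign. Then we feed that sign pattern into the Boolean form of \cref{thm:mr}, so that $I_i^{\pm}$ collapse to signed influences.

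\emph{Step 1 (sign of $\Delta_i g_x$).} Fix a coordinate $i$ and the remaining mask coordinates $z_{-i}$. The offspring coordinate is $y_i=x_i\oplus z_i$. Suppose $x_i=0$. Then setting $z_i=1$ turns $y_i$ from $0$ to $1$ and leaves every other coordinate of $y$ unchanged. So $x\oplus z^{i\leftarrow1}\ge x\oplus z^{i\leftarrow0}$ coordinate-wise. Because $f$ is coordinate-wise non-decreasing, $f(x\oplus z^{i\leftarrow1})\ge f(x\oplus z^{i\leftarrow0})$. The threshold $f(x)$ is the same on both sides, so the indicator $\mathbf 1\{\,\cdot>f(x)\}$ is non-decreasing along this pair. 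Hence $\Delta_i g_x\in\{0,1\}$. If $x_i=1$, flipping $z_i$ from $0$ to $1$ lowers $y_i$ from $1$ to $0$, and the same reasoning gives $\Delta_i g_x\in\{0,-1\}$. In both cases every nonzero value of $\Delta_i g_x$ has the sign of $1-2x_i$, which proves the first claim.

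\emph{Step 2 (plugging into \cref{thm:mr}).} The function $g_x$ is Boolean and does not depend on $p$, so \cref{thm:mr} gives $s_x'(p)=\sum_i(I_i^+-I_i^-)$. For $x_i=0$, Step 1 gives $I_i^-=0$. Hence $I_i^+=\Prb_{p,-i}(\Delta_i g_x\neq0)=\Inf_i^{(p)}(g_x)$. For $x_i=1$, $I_i^+=0$ and $I_i^-=\Inf_i^{(p)}(g_x)$. Substituting these into the sum gives the displayed formula.

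There is no real obstacle here. The only point needing care is Step 1: monotonicity of $f$ alone does not make $g_x$ monotone in the mask $z$, which is exactly the failure exhibited in \cref{prop:bridge}. The argument therefore has to be run in the offspring order, coordinate by coordinate. We must check that the orientation of the flip, and hence the sign $1-2x_i$, is fixed by $x_i$ alone and does not depend on $z_{-i}$. It must also not depend on the fixed threshold $f(x)$. As a sanity check, for \cref{prop:bridge} ($x=(1,0)$) the formula gives $\Inf_2-\Inf_1=(1-p)-p=1-2p$, which matches.
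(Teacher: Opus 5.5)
Your proof is correct and follows the paper's argument. You use monotonicity of $f$ in the offspring order to get $\Delta_i g_x\in\{0,\,1-2x_i\}$, and then substitute into the Boolean form of \cref{thm:mr}, where $I_i^{\mp}=0$ collapses $I_i^{\pm}$ to $\Inf_i^{(p)}(g_x)$. Your sanity check against \cref{prop:bridge} is also correct.
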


\begin{proof}
If $x_i=0$, flipping coordinate $i$ changes the offspring bit $0\to1$; since
$f$ does not decrease, $\Delta_i g_x\ge0$; the case $x_i=1$ is reversed. The
Boolean range gives $\Delta_i g_x=(1-2x_i)\,|\Delta_i g_x|$; substitute into
\cref{thm:mr}.
\end{proof}

\begin{observation}[Degeneracy of direction-free indicators and scale
dependence]\label{obs:degeneracy}
(i) For strictly monotone $F$,
$I_{\mathrm{change}}^{(p)}(F):=\sum_i\Prb(F(X)\ne F(X\oplus e_i))=n$ for all
$p$. (ii) $F$ and $aF+b$ ($a>0$) have pathwise identical running times under
the elitist EA, yet the first spectral moment $\sum_S |S|\,\hat F(S)^2$ is
scaled by $a^2$.
\end{observation}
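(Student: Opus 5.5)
For part (i), I would argue pointwise in $X$ and only then take the expectation. Fix any $X\in\{0,1\}^n$ and any coordinate $i$. Strict monotonicity of $F$ means that flipping a single bit $0\to1$ strictly increases $F$, and flipping $1\to0$ strictly decreases it. In either case $F(X)\ne F(X\oplus e_i)$ holds deterministically, so each indicator $\mathbf 1\{F(X)\ne F(X\oplus e_i)\}$ equals $1$ for every $X$. Its probability under the $p$-biased measure is therefore $1$ for every $p$, and summing over $i$ gives $I_{\mathrm{change}}^{(p)}(F)=n$. The sum is constant in $p$, so it carries no information about the rate. This contrasts with the signed decomposition of \cref{cor:replacement}, where the $(1-2x_i)$ signs produce cancellation.

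For part (ii), I would couple the two runs of the elitist EA using the same randomness: the same initial point and the same mutation masks $Z_1,Z_2,\dots$. The acceptance rule compares $G(Y)\ge G(x)$ with $G=aF+b$ and $a>0$. This comparison is equivalent to $F(Y)\ge F(x)$, because $t\mapsto at+b$ is strictly increasing. By induction on $t$, the two trajectories coincide. The optimum sets also coincide, since $\arg\max(aF+b)=\arg\max F$. Hence the hitting times agree pathwise, and in particular $H_p$ and $\pc H$ are identical for all $p$.

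For the spectral statement, I would use linearity of the Fourier transform. We have $\widehat{aF+b}(S)=a\hat F(S)$ for $S\ne\emptyset$, and only $\widehat{aF+b}(\emptyset)$ picks up the additive shift $b$. The weight $|S|$ vanishes at $S=\emptyset$, so
\[
\sum_S|S|\,\widehat{aF+b}(S)^2=a^2\sum_S|S|\,\hat F(S)^2 .
\]
This holds both for the uniform Fourier basis and for the $p$-biased basis, because the constant character is the same in both. Whenever $F$ is nonconstant the moment is nonzero, so it genuinely changes with $a$ while the runtime does not.

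Neither part presents a real obstacle. The one point to state carefully in (ii) is that plus-elitism uses a non-strict comparison, and that comparison is preserved exactly under a positive affine map.
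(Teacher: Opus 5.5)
Your argument is correct. For (i) and for the runtime half of (ii) it is the paper's argument: strict monotonicity makes the two endpoints of every edge differ in value, and acceptance depends only on comparisons, which $t\mapsto at+b$ with $a>0$ preserves. You simply write the second point out as an explicit coupling.

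The one genuine difference is the spectral scaling. The paper uses the edge-energy identity $\sum_S|S|\hat F(S)^2=\tfrac14\sum_i\E[(F(X)-F(X\oplus e_i))^2]$ under the uniform measure. It then notes that every edge difference is multiplied by $a$, while the shift $b$ cancels in differences. You instead read off $\widehat{aF+b}(S)=a\hat F(S)$ for $S\ne\emptyset$ directly from linearity, and let the weight $|S|$ kill the only coefficient that sees $b$.

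Your route needs no auxiliary identity. It also carries over verbatim to the $p$-biased basis, where the paper's identity would need the factor $p(1-p)$ in place of $\tfrac14$. Your remark that the moment is positive for nonconstant $F$ also makes explicit that the $a^2$ scaling is a real change, which the paper leaves implicit.

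The paper's route, in exchange, expresses the spectral moment through the same direction-free edge quantities that appear in part (i). That link is the conceptual point of the observation.
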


\begin{proof}
(i) Strict monotonicity forces the function values at the two ends of every
edge to differ. (ii) Acceptance decisions depend only on fitness comparisons,
which positive affine transformations preserve; each coordinatewise difference
is scaled by $a$, and
$\sum_S|S|\hat F(S)^2=\tfrac14\sum_i\E[(F(X)-F(X\oplus e_i))^2]$ (uniform
measure, $\{0,1\}$ encoding) shows that the energy is scaled by $a^2$.
\end{proof}

We record \cref{obs:degeneracy} as an observation only; it follows directly
from selection invariance and we make no novelty claim for it. The purpose of
\cref{prop:bridge,obs:degeneracy} is a preventive clarification of a
formalization pitfall, exhibited on minimal instances; we are not aware of any
specific work in the literature that commits this error.

\subsection{One-step success rates do not determine running time}

\begin{proposition}[Exact separation]\label{prop:separation}
Let $n=3$, $p=1/3$, accept-on-nondecrease, start from $000$, target $111$.
The success-rate curves of $\OM$ and $\BV$ ($4x_1+2x_2+x_3$) are identical,
$s(p)=1-(1-p)^3$, but
\[
H_{\mathrm{OM}}(000)=\frac{189}{22},\qquad
H_{\mathrm{BV}}(000)=\frac{2139}{247}.
\]
\end{proposition}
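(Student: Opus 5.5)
The plan is to split the claim into its two parts, the identical success-rate curves and the two distinct hitting times, and to treat the second as an exact finite Markov-chain computation in rational arithmetic.

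\emph{Equality of the success-rate curves.} From the parent $000$, both $\OM$ and $\BV$ are strictly increasing in every coordinate. Hence $f(000\oplus z)>f(000)$ holds for both functions if and only if $z\neq 000$. The improvement event is therefore the same set $\{Z\neq 0\}$ for both targets, which gives $s(p)=1-(1-p)^3$ for every $p$. I will remark that this equality is a statement about the starting state only; the conditional law of the offspring given improvement differs between the two functions. That difference is exactly what \cref{prop:separation} exploits.

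\emph{Hitting time for $\OM$.} The chain is lumpable by the number $j$ of ones, so I work on levels $j=0,1,2$ with $p=1/3$. The transition probability from $j$ to $j'>j$ is a sum over masks that flip $a$ zeros and $b$ ones with $a-b=j'-j$. Every other move leaves the level fixed, either as an accepted neutral move or as a rejection self-loop. I solve by back-substitution using
\[
H_j=\frac{1+\sum_{j'>j}P_{jj'}H_{j'}}{1-P_{jj}} .
\]
I start with $H_2=1/(p(1-p)^2)$ and then compute $H_1$ and $H_0$ in turn.

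\emph{Hitting time for $\BV$.} The chain is not lumpable by $|x|$, so I use the full $7$-state transient chain. For $\BV$ with weights $4,2,1$, an offspring is accepted if and only if its value is at least the parent's value. The states are the integers $0,\dots,6$, and the transition matrix is upper triangular in this order. I solve $(I-Q)H=\mathbf 1$ by back-substitution from state $6$ down to state $0$. For each state $v$, I need the probability of moving to each $w>v$, namely $p^{|v\oplus w|}(1-p)^{3-|v\oplus w|}$. All masks leading to values below $v$, or back to $v$ itself, are collected into the self-loop.

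\emph{Main obstacle.} The computation is routine, but it is sensitive to modelling conventions. These include whether a neutral offspring of equal fitness counts as a move or a self-loop, which does not change $H$ for $\OM$ after lumping, and whether the time counts the evaluations until the optimum is first produced. My first quick hand pass for $\OM$ gave $H_0=1647/190$, which does not match $189/22$. So the step I expect to need the most care is pinning down the convention that reproduces the stated values. I would recompute both chains in exact rational arithmetic, in two independent ways: back-substitution, and solving the linear system directly. I would also check the $\OM$ value against the unlumped $7$-state chain, which tests the lumping argument. If the discrepancy persists, I would re-derive the level-$1$ and level-$0$ transition probabilities, since miscounting the $\binom{3}{a}$ factors there is the most likely source of error.
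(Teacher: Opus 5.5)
Your method matches the paper's: first-step equations solved by back-substitution, by level for $\OM$ and by binary value for $\BV$. Your argument for the identical success-rate curves from $000$ is also fine. But the statement is an exact numerical claim, and the proposal does not establish it. You never carry out the $\BV$ computation, and the one value you do compute, $H_0=1647/190$, contradicts the claim.

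The discrepancy is not a modelling-convention issue. With the plain convention $H(x)=1+\sum_y P(x,y)H(y)$, $H(111)=0$, and rejections counted as self-loops, the stated numbers are exactly right. Your $1647/190$ is precisely what $H_1=81/10$ produces, i.e.\ what you get with $P_{1\to2}=8/27$. You dropped the mask class $(a,b)=(2,1)$ at level $1$: flip both zeros and the one-bit, e.g.\ $100\mapsto011$. It has mass $p^3=1/27$, lands on level $2$, and is accepted. With it, level $1$ has $P_{1\to2}=9/27$, $P_{1\to3}=2/27$, $P_{1\to1}=16/27$. With $H_2=27/4$ this gives
\[
H_1=\frac{27+9H_2}{11}=\frac{351}{44},\qquad
H_0=\frac{27+12H_1+6H_2}{19}=\frac{189}{22}.
\]

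For $\BV$ you still have to produce the numbers. The exit probabilities from $v=6,5,4,3,2,1,0$ are $\tfrac{4}{27},\tfrac{6}{27},\tfrac{10}{27},\tfrac{9}{27},\tfrac{13}{27},\tfrac{15}{27},\tfrac{19}{27}$. Back-substitution then gives $b_6=b_5=\tfrac{27}{4}$, $b_4=\tfrac{81}{10}$, $b_3=\tfrac{69}{10}$, $b_2=b_1=\tfrac{2091}{260}$, and $b_0=\tfrac{2139}{247}$. These match the intermediate values recorded in the paper's proof. Since $\tfrac{189}{22}\neq\tfrac{2139}{247}$, this completes the separation. Your planned cross-checks (direct solve, unlumped $\OM$ chain) are good hygiene, but they cannot replace actually deriving and stating these values.
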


\begin{proof}
For any target, the accepted transitions from distance configurations to the
same ``level/value'' are uniquely determined by the first-step equation
$H(x)=1+\sum_y P_F(x,y)H(y)$ with $H(111)=0$; the per-step direct-hit
probability $\ge p^3>0$ guarantees uniqueness. For $\OM$ the recursion by
distance level $k$ gives $h_2=27/4$,
$h_1=(27+9h_2)/11=351/44$, and $h_0=(27+12h_1+6h_2)/19=189/22$. For $\BV$,
backward substitution by binary value gives $b_6=27/4$, $b_5=27/4$,
$b_4=81/10$, $b_3=69/10$, $b_2=b_1=2091/260$, and $b_0=2139/247$. All
quantities are exact rational numbers and were verified by back-substitution
with an independent program.
\end{proof}

\begin{corollary}\label{cor:onestep}
The one-step success probability---even its entire $p$-curve---together with
the global change probability does not determine the expected running time;
$1/s_x(p)$ is merely the waiting time of repeated independent trials from a
fixed parent.
\end{corollary}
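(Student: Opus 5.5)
The plan is to obtain \cref{cor:onestep} directly from the explicit pair in \cref{prop:separation}. A counterexample suffices: two fitness functions that agree on every quantity named in the corollary but have different expected running times.

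The first step is to check that the one-step success curve of $\OM$ and $\BV$ from $000$ coincides for every $p$, not only at $p=1/3$. For both functions, every nonzero mask applied to $000$ strictly increases fitness. For $\OM$ this holds because any flip raises the count of ones. For $\BV$ it holds because any nonzero offspring has positive binary value. Hence $s(p)=\Prb(Z\neq 0)=1-(1-p)^3$ for both functions and all $p\in(0,1)$.

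The second step is to treat the global change probability. From $000$, the offspring differs from the parent exactly when $Z\ne0$. It is also accepted in exactly that case, for both functions. So this probability again equals $1-(1-p)^3$ in both cases. If ``global change probability'' is meant in the direction-free sense of \cref{obs:degeneracy}(i), the same conclusion holds: both $\OM$ and $\BV$ are strictly monotone, so $I^{(p)}_{\mathrm{change}}=n=3$ for both. Either way, the two functions agree on every quantity the corollary lists.

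The third step invokes \cref{prop:separation}, which gives $H_{\mathrm{OM}}(000)=189/22$ and $H_{\mathrm{BV}}(000)=2139/247$. These values differ, since $189\cdot247=46683\neq 47058=2139\cdot22$. So no functional of the listed data can return the expected running time.

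For the final remark, conditioning on the parent staying fixed makes repeated trials i.i.d.\ Bernoulli$(s_x(p))$, so the waiting time until the first success is geometric with mean $1/s_x(p)$. It is not the hitting time, because after an accepted move the parent, and with it the success probability, changes.

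The main obstacle is interpretive rather than technical: the statement must be read so that ``global change probability'' means one of the two quantities above, and both readings have to be shown to coincide. The arithmetic itself is already supplied by \cref{prop:separation}.
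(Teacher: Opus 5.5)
Your proposal is correct and takes the paper's route: the paper gives no separate proof and reads \cref{cor:onestep} directly off the \OM/\BV pair of \cref{prop:separation}. Your checks that $s(p)=1-(1-p)^3$ for every $p$, that both readings of the global change probability coincide, and that $189/22\neq 2139/247$ only spell out what the paper leaves implicit. One wording fix in the last step: say the parent is \emph{held} fixed (independent trials from the same $x$) rather than ``conditioning on the parent staying fixed'', because conditioning on no accepted move would also condition away the successes.
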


\section{OneMax: A Per-State Sign Theorem}\label{sec:onemax}

Throughout this section $f(y)=|y|$ and $p=c/n$. Let $k=n-|x|$ denote the
distance to the optimum, and let $H_k,U_k$ be the level expectation and its
derivative.

\begin{lemma}[Optimal uniform contraction]\label{lem:contraction}
With $W(x)=k$ we have $W-QW\ge\alpha W$, where
$\alpha=p(1-p)^{n-1}\sim ce^{-c}/n$, and no uniform contraction constant can
exceed $\alpha$ (equality at $k=1$: the unique strict improvement is ``flip
the zero bit and keep all other bits fixed'', of probability exactly
$p(1-p)^{n-1}$).
\end{lemma}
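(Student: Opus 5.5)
We reduce the claim to a one-step drift bound for the potential $W(x)=k$, the number of zero bits. Since $W$ vanishes on the optimum and $Q$ is the substochastic kernel on non-optimal states, $(QW)(x)=\E[W(x')\,\mathbf 1\{x'\notin A\}]=\E[W(x')]$, where $x'$ is the next state. Hence $W(x)-(QW)(x)=\E[k-k']$ is exactly the expected one-step decrease in distance. We aim to show this drift is at least $p(1-p)^{n-1}\,k$ for every $k\ge1$.

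\textbf{Lower bound on the drift.} Elitism on $\OM$ means that accepted moves never increase $k$, so $k-k'\ge0$ pathwise. We therefore drop every contribution except a family of disjoint events. For each zero bit $i$ of $x$, let $E_i$ be the event that exactly bit $i$ flips and no other bit flips. These $k$ events are disjoint, each has probability $p(1-p)^{n-1}$, and on each of them the offspring is accepted with $k'=k-1$. Consequently
\[
\E[k-k']\ \ge\ \sum_{i:\,x_i=0}\Prb(E_i)\cdot 1\ =\ k\,p(1-p)^{n-1}\ =\ \alpha W(x).
\]
This gives $W-QW\ge\alpha W$. For the asymptotics, with $p=c/n$ we have $(1-c/n)^{n-1}\to e^{-c}$, so $\alpha\sim ce^{-c}/n$.

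\textbf{Optimality.} It suffices to exhibit a state where equality holds. Take $k=1$. From such a state the only accepted moves with $k'<1$ reach the optimum, and this requires flipping the single zero bit. Any simultaneous flip of a one bit would give $k'\ge1$: such an offspring is either rejected or accepted at the same level, and in both cases the contribution to the drift is zero. Thus the strict-improvement event is exactly $E_i$, and $W-QW=p(1-p)^{n-1}=\alpha\cdot W$ at every $k=1$ state. No constant $\alpha'>\alpha$ can then satisfy $W-QW\ge\alpha' W$ uniformly.

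\textbf{Main obstacle.} The only subtle point is the first step: checking that neglecting the non-single-flip moves is legitimate. This is precisely the pathwise inequality $k'\le k$, which follows from elitism.

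As a remark, the norm consequence $\|N\|_W\le1/\alpha$ stated in the Preliminaries then applies directly. This consequence is not part of the claim, so we do not need to prove it.
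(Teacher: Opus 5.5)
Your proof is correct and follows essentially the same route as the paper: the $k$ mutually exclusive single-zero-bit flips, each of probability $p(1-p)^{n-1}$, together with elitism ($k'\le k$ pathwise) give $W-QW\ge\alpha W$. Your explicit verification that equality holds at every $k=1$ state, which makes the optimality claim rigorous, spells out what the paper only states parenthetically in the lemma.
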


\begin{proof}
From distance $k$, the $k$ events ``flip exactly one particular zero bit and
keep the rest'' are mutually exclusive, each decreasing the distance by one
with probability $p(1-p)^{n-1}$; after acceptance the distance does not
increase.
\end{proof}

\begin{lemma}[Exact level aggregation]\label{lem:aggregation}
The $\OM$ chain lumps exactly by distance $k$:
\[
\begin{split}
m_{kab}&=\binom{k}{a}\binom{n-k}{b}\,p^{a+b}(1-p)^{\,n-a-b},\\
H_k&=\frac{1+\sum_{j<k}q_{kj}H_j}{e_k},\\
e_k\,U_k&=\sum_{j<k}q_{kj}U_j\\
&\qquad+\sum_{j<k}q'_{kj}(H_j-H_k),
\end{split}
\]
where $e_k=1-q_{kk}>0$ and
$q'_{kj}=q_{kj}\,\frac{(k-j)-c}{c(1-p)}$. All quantities can be constructed
with bit complexity polynomial in $n$ and in the input bit length.
\end{lemma}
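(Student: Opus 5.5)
The plan is to prove the lemma in three stages: exact lumpability of the chain by distance, the first-step recursion for $H_k$, and its $c$-derivative, followed by a complexity count.

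\emph{Lumpability.} Fix a parent $x$ at distance $k$. A mask $Z$ flips some $a$ of the $k$ zero bits and some $b$ of the $n-k$ one bits. The offspring then has distance $k-a+b$, and this depends only on $(a,b)$. Because the coordinates of $Z$ are i.i.d.\ $\mathrm{Bern}(p)$, the pair $(a,b)$ has probability exactly $m_{kab}=\binom{k}{a}\binom{n-k}{b}p^{a+b}(1-p)^{n-a-b}$, independently of which $x$ at distance $k$ we started from. Equivalently, any permutation of coordinates mapping one such $x$ to another preserves both $\OM$ and the mutation law.

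Plus-elitism accepts the offspring iff $a\ge b$. Hence the level-transition probabilities are
\[
q_{kj}=\sum_{a-b=k-j}m_{kab}\quad (j<k),\qquad q_{kj}=0\quad (j>k),
\]
with all rejected masks and all masks with $a=b$ placed on $q_{kk}$. This is Dynkin's criterion, so the chain lumps exactly. Moreover $e_k=1-q_{kk}=\sum_{j<k}q_{kj}\ge p(1-p)^{n-1}>0$ for $k\ge1$, by the $k=1$-type event in \cref{lem:contraction}.

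\emph{Hitting times and their derivative.} The first-step equation $H_k=1+\sum_{j\le k}q_{kj}H_j$ with $H_0=0$ is triangular. Solving for $H_k$ gives the displayed recursion, and uniqueness follows from $e_k>0$. To differentiate, write the recursion as $e_kH_k=1+\sum_{j<k}q_{kj}H_j$ and use $e_k=\sum_{j<k}q_{kj}$, so that $e_k'=\sum_{j<k}q'_{kj}$. Differentiating in $c$ gives
\[
e_kU_k+\sum_{j<k}q'_{kj}H_k=\sum_{j<k}q'_{kj}H_j+\sum_{j<k}q_{kj}U_j,
\]
which rearranges to the stated identity for $e_kU_k$. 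The $H_j$ are rational, hence smooth, in $c\in(0,n)$, so every step is elementary.

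\emph{The factor $q'_{kj}$.} From $\partial_c\log\bigl(p^{s}(1-p)^{n-s}\bigr)=\frac{s-c}{c(1-p)}$ with $s=a+b$, we get
\[
m'_{kab}=m_{kab}\,\frac{(a+b)-c}{c(1-p)},\qquad\text{so}\qquad q'_{kj}=\sum_{a-b=k-j}m_{kab}\,\frac{(a+b)-c}{c(1-p)}.
\]
This is the main obstacle: the stated closed form $q_{kj}\frac{(k-j)-c}{c(1-p)}$ only agrees with this when the terms with $b>0$ vanish. That happens for $k=1$, and it also happens whenever $n-k=0$. In general, though, the weight in the numerator is $a+b$, not $a-b=k-j$. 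For example, $n=3$, $k=2$, $j=1$ already picks up the term $(a,b)=(2,1)$, whose weight is $3$ rather than $1$.

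I would therefore carry out the proof with the general expression above. Since the derivation of the $U_k$-recursion does not depend on the particular form of $q'_{kj}$, the lemma's structure survives unchanged. I would then state the displayed factor explicitly as the $b=0$ specialization, and record the correction for $q'_{kj}$ so that \cref{thm:onemax} is proved from the correct expression.

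\emph{Complexity.} There are $O(n^3)$ triples $(k,a,b)$. Each $m_{kab}$ is a rational number of bit length polynomial in $n$ and in the bit length of $c$. Back-substitution over the $n$ levels uses $O(n^2)$ rational operations, and the denominators are products of the $e_k$, so their size stays polynomial. This gives the stated bit-complexity bound.
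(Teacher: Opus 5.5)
Your argument is correct. The paper states this lemma without any proof, so there is no route to compare against. Your three steps are exactly what the displayed identities implicitly rest on:
\begin{itemize}
\item Dynkin's criterion via the $(a,b)$ count classes;
\item the triangular first-step recursion, with $e_k=\sum_{j<k}q_{kj}\ge p(1-p)^{n-1}>0$;
\item differentiating $e_kH_k=1+\sum_{j<k}q_{kj}H_j$ using $e_k'=\sum_{j<k}q'_{kj}$.
\end{itemize}
Your objection to the closed form of $q'_{kj}$ is also valid. The correct expression is
\[
q'_{kj}=q_{kj}\,\frac{(k-j)-c}{c(1-p)}+\frac{2}{c(1-p)}\sum_{b\ge1}b\,m_{k,k-j+b,b}.
\]
In your example ($n=3$, $k=2$, $j=1$) at $c=1$, the true value is $q'_{21}=\tfrac19$, whereas the displayed formula gives $0$.

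Sharpen your description of when the two agree. The correction term vanishes exactly when $j=0$ (then $a=k+b\le k$ forces $b=0$) or when $k=n$. It is strictly positive for every $1\le j<k\le n-1$: take $b=1$, $a=k-j+1$. Your list ("$k=1$" and "$n-k=0$") misses the case $j=0$ for general $k$. That is precisely the case \cref{thm:onemax} invokes ($q'_{k0}=q_{k0}\frac{k-c}{c(1-p)}$), so that use is legitimate.

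The rest of that proof needs only $q'_{kj}\ge0$ for $c\le1$. This follows from your corrected expression, because every accepted improving mask has $a+b\ge a-b\ge1\ge c$; the paper's proof of the theorem in fact argues at the mask level with $a+b$. So, as you anticipated, the downstream theorem is unaffected. The erroneous closed form only underestimates $q'_{kj}$: for example, at $c=1$ it would make $q'_{k,k-1}$ vanish, while the true value is positive for $k\le n-1$.
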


\begin{lemma}[Level order]\label{lem:levelorder}
For $0<p\le\tfrac12$, $H_j\le H_k$ whenever $j\le k$.
\end{lemma}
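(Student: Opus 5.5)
Our approach is a coupling argument on the lumped distance chain of \cref{lem:aggregation}. The cleanest route is to show that the one-step transition law from level $k$ stochastically dominates the one from level $j<k$ in the distance order. A monotone (Wang-type) coupling then shows that the chain started at $k$ never sits strictly closer to the optimum than the chain started at $j$. This yields $T_j\le T_k$ pathwise in the coupling, and hence $H_j\le H_k$. By induction it suffices to compare adjacent levels $j=k-1$ and $k$.

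The coupling for adjacent levels goes as follows. Take parents $x$ at distance $k$ and $x'$ at distance $k-1$, where $x'$ is obtained from $x$ by setting one zero bit $i$ to one, and use the \emph{same} mutation mask $Z$ for both. Write $d(\cdot)$ for the distance to the optimum. Off coordinate $i$ the two offspring coincide. Since $x'_i=1$ and $x_i=0$ agree elsewhere, the offspring distances satisfy $d(x'\oplus Z)=d(x\oplus Z)-1+2Z_i$, which lies in $\{d(x\oplus Z)\pm1\}$. Plus-elitist acceptance maps the offspring distance $d$ to $\min(d,\text{parent distance})$. Hence the post-step distances are $D'=\min(d',k-1)$ and $D=\min(d,k)$.

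When $Z_i=0$ we have $d'=d-1$, so $D'=\min(d,k)-1\le D$. When $Z_i=1$ we have $d'=d+1$, so $D'=\min(d+1,k-1)$ while $D=\min(d,k)$. This can exceed $D$ by exactly one, when $d\le k-2$, meaning the closer parent ends strictly farther away. So the identical-mask coupling does not give pathwise dominance. This is where the hypothesis $p\le\tfrac12$ must enter.

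The fix is to repair the coupling by pairing masks. Instead of using $Z$ for both chains, I would couple $Z$ for $x$ with $Z^{(i)}$ (coordinate $i$ flipped) for $x'$ whenever this helps, so that the offspring of $x'$ equals $x\oplus Z$ with bit $i$ toggled relative to $x'$. Per-coordinate, this is a coupling of $\mathrm{Bern}(p)$ with itself. Rather than insisting on pathwise dominance, the cleaner target is distributional: show $\Pr[D'\le t]\ge\Pr[D\le t]$ for all $t$. Then apply the standard fact that a chain with stochastically monotone kernel preserves the order of initial distributions. The hitting time of $0$ is a monotone functional, so monotone kernels give $T$ ordered in distribution.

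To verify the distributional inequality, I would condition on the mask restricted to coordinates other than $i$, which gives a common quantity $d_0$. Then $D=\min(d_0+Z_i\cdot(-1),k)$ relative to the base, and $D'=\min(d_0-1+Z_i\cdot(+1)\cdots)$. Comparing the two-point laws, $x'$ reaches $\le t$ with probability at least $(1-p)$ and $x$ with probability at most $p$ in the critical case. Since $1-p\ge p$ exactly when $p\le\tfrac12$, this is where the hypothesis is used.

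The main obstacle is getting this conditional two-point comparison right in all cases of $d_0$ relative to $k-1$ and $k$, including the truncation by the elitist $\min$. I expect the case $d_0\le k-2$ to be the one needing $1-p\ge p$; the other cases should hold for all $p$. With monotonicity of the kernel established, $H_j\le H_k$ then follows by coupling, or alternatively by an induction on value iteration $H^{(m+1)}=\mathbf 1+QH^{(m)}$ that preserves monotone vectors.
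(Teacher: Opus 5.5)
Your final argument is correct and is essentially the paper's. Conditioning on $Z_{-i}$, the farther parent keeps a zero at the differing coordinate with probability $1-p$, while the closer parent acquires one with probability $p$; so $1-p\ge p$ and monotonicity of the elitist $\min$ give the one-step order. The paper realizes this pathwise by drawing these zero indicators from common uniforms for arbitrary nested zero sets, so it needs no adjacent-level reduction, and then iterates step by step where you use the monotone-kernel/value-iteration argument.

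One correction to the abandoned ``pairing masks'' step: $Z_i$ and $1-Z_i$ are $\mathrm{Bern}(p)$ and $\mathrm{Bern}(1-p)$, not a coupling of $\mathrm{Bern}(p)$ with itself. The valid version draws the closer parent's bit $Z'_i$ so that $Z_i=Z'_i=1$ never occurs, which is possible exactly when $p\le\tfrac12$. That is the paper's common-uniform coupling, so you never had to give up pathwise dominance.
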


\begin{proof}[Proof (coupling)]
Take two representative parents whose zero-coordinate sets are nested.
Generate the ``zero after mutation'' indicators from common independent
uniform variables: a coordinate that was zero stays zero with probability
$1-p$, and one that was one becomes zero with probability $p$; hence
$1-p\ge p$ implies that the offspring zero-count of the smaller-distance
parent is pointwise at most that of the larger one. After acceptance the level
is $\min(\text{parent level},\text{offspring zero-count})$, which preserves
the order; iterating the coupling step by step gives the pathwise order of
hitting times.
\end{proof}

\begin{theorem}[Per-initial-state sign theorem]\label{thm:onemax}
For all $k\ge1$ and $c\in(0,1)$: $U_k(c)<0$. Moreover
$U_1(c)=H_1(c)\cdot\frac{c-1}{c(1-c/n)}$ (hence $U_1(1)=0$, and $U_1>0$ for
$c>1$); at $c=1$, for $k\ge2$, $U_k(1)<0$. In other words, $c=1$ is a
stationary point only for the distance-one initial state, and different
initial states have different optimal mutation rates.
\end{theorem}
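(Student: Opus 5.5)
The plan is to read off $U_k$ from the exact level recursion of \cref{lem:aggregation}, prove the $k=1$ closed form directly, and then run an induction on $k$. In that induction every term on the right-hand side is nonpositive and one specific term is strictly negative. The sign information comes from two sources: the explicit factor $(k-j)-c$ in $q'_{kj}$, and the level order of \cref{lem:levelorder}. That lemma is applicable because $p=c/n\le 1/n\le\tfrac12$ whenever $c\le 1$ and $n\ge2$.

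\emph{Base case $k=1$.} From distance one the only strictly lower level is $j=0$. Reaching it requires flipping the unique zero bit and no one-bit. Hence $q_{10}=p(1-p)^{n-1}$, and since $q_{11}=1-q_{10}$ we get $e_1=q_{10}$. With $H_0=U_0=0$, the derivative identity of \cref{lem:aggregation} reduces to
\[
e_1U_1=q'_{10}(H_0-H_1)=-q_{10}\,\frac{1-c}{c(1-p)}\,H_1 .
\]
Dividing by $e_1=q_{10}$ gives $U_1=H_1\frac{c-1}{c(1-c/n)}$, which is the claimed formula. Since $H_1>0$, this yields $U_1<0$ on $(0,1)$, $U_1(1)=0$, and $U_1>0$ for $c>1$.

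\emph{Induction for $k\ge2$, $c\in(0,1]$.} Assume $U_j\le0$ for all $1\le j<k$ (and $U_0=0$); this covers $c=1$ as well, because $U_1(1)=0$. In $e_kU_k=\sum_{j<k}q_{kj}U_j+\sum_{j<k}q'_{kj}(H_j-H_k)$ the two sums are handled separately.
\begin{itemize}
\item The first sum is $\le0$, since $q_{kj}\ge0$.
\item In the second sum, $k-j\ge1\ge c$ gives $q'_{kj}\ge0$, and \cref{lem:levelorder} gives $H_j-H_k\le0$, so every term is $\le0$.
\end{itemize}
Strictness comes from the single term $j=0$. There $k-j=k\ge2>c$ and $q_{k0}\ge p^k(1-p)^{n-k}>0$, so $q'_{k0}>0$; also $H_0-H_k=-H_k<0$. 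Since $e_k>0$, this gives $U_k<0$ for every $c\in(0,1]$. This covers both the open interval and the claim $U_k(1)<0$ for $k\ge2$, so $c=1$ is stationary only at $k=1$.

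\emph{Main obstacle.} The delicate point is the sign of $q'_{kj}$, that is, verifying the derivative formula in \cref{lem:aggregation}. The factor $(k-j)-c$ there is the aggregate over all masks $(a,b)$ with $a-b=k-j$. If one differentiates $m_{kab}$ directly instead, one gets
\[
\partial_c m_{kab}=m_{kab}\Bigl(\frac{a+b}{c}-\frac{n-a-b}{n-c}\Bigr)=m_{kab}\,\frac{(a+b)-c}{c(1-p)} ,
\]
so each mask contributes the factor $(a+b)-c$. Because $a+b\ge a-b=k-j$, each mask's factor is at least $(k-j)-c\ge0$ for $c\le1$. The required sign of $q'_{kj}$ therefore holds under either reading, and it is strictly positive for $j=0$, $k\ge2$. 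I would also check that the $\partial_c e_k$ contribution has already been absorbed into the $(H_j-H_k)$ form. That follows from differentiating $e_kH_k=1+\sum_{j<k}q_{kj}H_j$ and using $\partial_c e_k=-\sum_{j<k}q'_{kj}$, because each row of the lumped kernel sums to one.
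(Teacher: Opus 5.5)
Your proof is correct and follows the paper's argument: your induction on $k$ is the unrolled form of the paper's representation $U_k=\sum_j\bar N_{kj}f_j$ (the lumped chain is lower triangular, so $\bar N\ge0$ with $\bar N_{kk}\ge1$), and it uses the same two sign sources, namely the per-mask factor $(a+b)-c\ge0$ and the level order, together with the same closed form for $k=1$. One small slip in your side check: since $e_k=1-q_{kk}=\sum_{j<k}q_{kj}$, the correct relation is $\partial_c e_k=+\sum_{j<k}q'_{kj}$ (the minus sign belongs to $\partial_c q_{kk}$), because otherwise the $H_k$ terms would not combine into $H_j-H_k$; this does not affect your main argument, which uses the lemma's identity directly.
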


\begin{proof}
Improving masks satisfy $a+b\ge1>c$ (for $c<1$), and their mass derivative is
$m'_{kab}=m_{kab}\frac{a+b-c}{c(1-p)}\ge0$; the derivative towards the target
level is $q'_{k0}=q_{k0}\frac{k-c}{c(1-p)}>0$ for $k\ge1$. By
\cref{lem:levelorder},
$f_k:=\sum_{j<k}q'_{kj}(H_j-H_k)\le0$, strictly negative for $c<1$;
then $U_k=\sum_j\bar N_{kj}f_j$ with $\bar N\ge0$ and $\bar N_{kk}\ge1$ gives
$U_k<0$. At $c=1$, $q'_{k0}\propto(k-1)$: it vanishes for $k=1$ and is
strictly positive for $k\ge2$, and the same argument yields the last two
statements. The $k=1$ closed form: $H_1=\alpha^{-1}$ and
$\frac{U_1}{H_1}=\frac{\dd}{\dd c}\log\alpha^{-1}=\frac{c-1}{c(1-c/n)}$.
\end{proof}
\begin{corollary}[Uniform random initialization]\label{cor:randominit}
Let $\bar H(c):=\sum_{k=1}^{n}2^{-n}\binom{n}{k}H_k(c)$ denote the expected
hitting time under uniform random initialization. Then $\bar H'(1)<0$, and
consequently no minimizer of $\bar H$ on $[0,n]$ lies in $[0,1]$: for every
finite $n$, the optimal static mutation rate under random initialization is
strictly larger than $c=1$.
\end{corollary}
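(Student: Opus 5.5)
The plan is to differentiate $\bar H$ term by term and use \cref{thm:onemax} level by level. Since $\bar H(c)=\sum_{k=1}^n 2^{-n}\binom nk H_k(c)$ is a finite nonnegative combination of the level hitting times, and each $H_k$ is a rational function of $c$ that is smooth on $(0,n)$ (it is built from $(I-Q_p)^{-1}$ via \cref{lem:aggregation}, with $e_k>0$), we get
\[
\bar H'(1)=\sum_{k=1}^n 2^{-n}\binom nk U_k(1).
\]
By \cref{thm:onemax}, $U_1(1)=0$ and $U_k(1)<0$ for every $k\ge2$. As long as $n\ge2$, the term $k=2$ is present and has positive weight $2^{-n}\binom n2$, so $\bar H'(1)<0$. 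The degenerate case $n=1$ would have to be excluded or handled separately: there $\bar H=\tfrac12H_1$ and $\bar H'(1)=0$. I will state the standing assumption $n\ge2$, and note that for $n=1$ the only claim that survives is that $c=1$ is a stationary point.

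For the location of the minimizer, \cref{thm:onemax} also gives $U_k(c)<0$ for all $k\ge1$ and all $c\in(0,1)$. Hence $\bar H'(c)<0$ on $(0,1)$, and together with $\bar H'(1)<0$, $\bar H$ is strictly decreasing on $(0,1]$. This alone does not exclude points of $[0,1]$ as minimizers of $\bar H$ over $[0,n]$. For any $c_0\in(0,1]$, continuity of $\bar H'$ at $1$ gives some $\epsilon>0$ with $\bar H(1+\epsilon)<\bar H(1)\le\bar H(c_0)$, so no $c_0\in(0,1]$ is a global minimizer.

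The endpoint $c=0$ is handled separately. At $p=0$ absorption fails, so $H_k=+\infty$ for $k\ge1$, and in fact $H_k(c)\to\infty$ as $c\to0^+$ because $q_{kj}=O(c)$ for $j<k$. So $c=0$ is not a minimizer, or lies outside the domain where $\bar H$ is finite. The other endpoint $c=n$ ($p=1$) lies outside the range of interest. If it must be treated, it should be interpreted with extended values; the claim only concerns $[0,1]$, so nothing more is needed there.

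The main obstacle is not the sign computation itself but making precise the finite-$n$ statement ``the optimal static rate is $>1$.'' This requires existence of a minimizer on the relevant domain, or else reading the claim as saying that every minimizer, if one exists, lies in $(1,n]$. I would phrase it the second way, so that the proof reduces entirely to the strict descent on $(0,1]$ plus the blow-up at $0$, both of which follow from \cref{thm:onemax} and \cref{lem:aggregation}.
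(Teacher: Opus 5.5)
Your computation of $\bar H'(1)$ is the paper's: differentiate the finite sum term by term and feed in $U_1(1)=0$ and $U_k(1)<0$ for $k\ge2$ from \cref{thm:onemax}. Where you depart from the paper, in locating the minimizer, your version is the sounder one. The paper records only the local inequality $\bar H(1-\delta)>\bar H(1)>\bar H(1+\delta)$ and appeals to continuity of $\bar H$ on $[0,n]$ with almost-sure absorption ``throughout''. The local inequality alone excludes only a small left neighbourhood of $c=1$. Almost-sure absorption also fails at both endpoints: at $p=0$ nothing moves, and for $n\ge2$ we have $H_1=1/(p(1-p)^{n-1})\to\infty$ as $c\to n$. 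Your use of the $(0,1)$ part of \cref{thm:onemax} to get strict decrease on $(0,1]$, together with the blow-up at $c=0$, is what actually excludes all of $[0,1]$. Your restriction to $n\ge2$ is genuinely needed.

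You also need not retreat to the ``every minimizer, if one exists'' reading. Since $\bar H\ge 2^{-n}nH_1$ and $H_1$ blows up at both ends of $(0,n)$ when $n\ge2$, $\bar H$ is continuous and coercive on $(0,n)$, so it attains its minimum. By your argument that minimum lies in $(1,n)$. This is the correct repair of the paper's ``continuity gives a minimizer in $(1,n)$''.

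One slip: your $n=1$ remark is wrong. For $n=1$ the point $c=1$ is the endpoint $c=n$ ($p=1$), where the factor $1-c/n$ in $U_1=H_1\frac{c-1}{c(1-c/n)}$ vanishes, so $U_1(1)=0$ does not apply. Directly, $\alpha=p$, $H_1=1/c$ and $\bar H=1/(2c)$, whose one-sided derivative at $c=1$ is $-\tfrac12$, not $0$. So $c=1$ is not a stationary point there. What fails for $n=1$ is the minimizer claim: the domain is $[0,1]$, and $c=1$ is itself the minimizer. Excluding $n=1$ is right, but for this reason.
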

\begin{proof}
$\bar H$ is continuous on $[0,n]$ and differentiable on $(0,n)$: the
entries of $Q_c$ are polynomials in $c$ and absorption is almost sure
throughout. By \cref{thm:onemax}, $U_1(1)=0$ while $U_k(1)<0$ for every
$k\ge2$; since $\Pr(k)=2^{-n}\binom{n}{k}>0$ for all $k$,
$\bar H'(1)=\sum_{k=1}^{n}2^{-n}\binom{n}{k}U_k(1)<0$: the $k=1$ summand
vanishes and every $k\ge2$ summand is strictly negative. Thus $\bar H$ is
strictly decreasing at $c=1$: for small $\delta>0$ we have
$\bar H(1-\delta)>\bar H(1)>\bar H(1+\delta)$, so no global minimizer on
$[0,n]$ lies in $[0,1]$; continuity gives a minimizer in $(1,n)$.
\end{proof}
\begin{remark}[Certification]\label{rem:certification}
For level-symmetric approximations $v,u$, the double residuals
$r=(I-Q)v-\mathbf1$ and $s=(I-Q)u-Dv$ give
$\|H-v\|_W\le\|r\|_W/\alpha$ and
$\|U-u\|_W\le\|D\|_W\|r\|_W/\alpha^2+\|s\|_W/\alpha$, and hence per-level sign
intervals $U_k\in[u_k-\varepsilon W_+(k),\,u_k+\varepsilon W_+(k)]$, where
$\varepsilon$ depends only on computable residuals and on $\alpha,C$. On
$\OM$ one may take the exact level solution, so the residuals vanish; the
value of the framework becomes visible in the non-lumpable families of
\cref{sec:block,sec:uniform,sec:hetero}. An additional lemma: on any fixed
rational interval $[\ell,h]\subset(0,1)$ there is a uniform rounding precision
of polynomial size such that the rounded approximation preserves the strict
negative sign pointwise on the whole interval.
\end{remark}

\begin{remark}[Relation to classical results]\label{rem:classical}
(a) The asymptotic optimality of $1/n$ is classical: M\"uhlenbein
\cite{muehlenbein1992} minimized an upper bound for a simplified chain at
$p=1/n$, B\"ack \cite{baeck1993} validated the approximation numerically,
Garnier--Kallel--Schoenauer \cite{gks1999} showed that the leading term of the
expected hitting time from random initialization is $e^c/c$ (minimized at
$c=1$), and Witt \cite{witt2013} and Sudholt \cite{sudholt2013} made the
asymptotic optimality rigorous, extended to all linear functions.
(b) Hwang et al.\ \cite{hwang2018} give per-initial-state \emph{exact} means
at $p=1/n$, but deliberately omit the dependence on $c$: ``The extension to
$p=c/n$ does not lead to additional new phenomena \dots; it is thus omitted in
this paper.'' The most systematic per-state treatment thus voluntarily left
out the parameter dimension that the present paper analyzes.
(c) State-dependent optimal rates form an established line: one-step criteria
\cite{muehlenbein1992,baeck1993}, numerically computed dynamic rates
\cite{buskulicDoerr2021,buzdalovDoerr2020}, and black-box optimality of
fitness-dependent rates \cite{ddy2020}. \cref{thm:onemax} is orthogonal to
all of these: a \emph{static} rate, the \emph{expected hitting time},
\emph{finite} $n$, and a \emph{rigorous theorem}.
(d) The conclusion that the finite-$n$ optimal $c$ exceeds $1$ is consistent
in direction with Gie{\ss}en--Witt \cite{giessenWitt2018} (approximate drift
analysis for the $(1+\lambda)$ EA) and with the numerical values in
\cite[Table~1]{cswa2015} (per-$n$ numerics for $\OM$), but the model, the
method, and the epistemic status (rigorous theorem vs.\ approximation or
numeric evidence) differ.
\end{remark}

\section{Non-Lumpable Families: Block Interval Certificates}\label{sec:block}

We now consider general positive linear targets $F(x)=\sum_i a_i x_i$ with
rational $a_i>0$. Partition the coordinates into $d$ groups $G_j$
($|G_j|=n_j$); the block label is the vector $k=(k_1,\dots,k_d)$ of
zero-counts per group. The original chain is in general \emph{not} lumpable
with respect to this partition (\cref{prop:nonlump}), but the construction
below yields intervals covering the residuals of all states.

\begin{proposition}[Explicit witness of non-lumpability]\label{prop:nonlump}
There exist two states in the same block whose (normalized) transition masses
to a third block differ. Example: $n=6$, weights
$(1,\tfrac65,\tfrac75,\tfrac85,\tfrac95,2)$, groups $\{1,2,3\},\{4,5,6\}$; the
zero-sets $\{1,2,3,4\}$ and $\{1,2,3,6\}$ both lie in block $(3,1)$. The
unique mask to block $(0,3)$ is ``flip the first three zero bits, flip the two
one-bits of the high group, keep the zero bit of the high group''; the fitness
gains are $a_4-\tfrac95=-\tfrac15$ (rejected) and
$a_6-\tfrac95=\tfrac15$ (accepted), with transition masses $0$ and
$p^5(1-p)>0$.
\end{proposition}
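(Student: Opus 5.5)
The plan is to verify the displayed example directly against the definition of lumpability. The definition I use is the Kemeny--Snell one: a partition is lumpable iff, for every pair of blocks $B,B'$, the probability $P(x,B')=\sum_{y\in B'}P(x,y)$ is the same for all $x\in B$. So it suffices to exhibit two states $x,x'$ in one block and a third block $B'$ with $P(x,B')\ne P(x',B')$. The pair and the target block are already fixed by the statement; what remains is to check membership, identify the contributing masks, and evaluate acceptance.

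First I would record the two parents explicitly as bit strings, with $x_i=0$ for $i$ in the zero-set. The zero-set $\{1,2,3,4\}$ gives $x=000011$, and the zero-set $\{1,2,3,6\}$ gives $x'=000110$. Counting zeros in $G_1=\{1,2,3\}$ and $G_2=\{4,5,6\}$ gives the label $(3,1)$ for both. Next I note that block $(0,3)$ consists of the single string $y^*=111000$, so $P(x,(0,3))=P(x,y^*)$. Because $y^*\ne x$, this transition can only come from the mask $z=x\oplus y^*$ being drawn and accepted. A rejected offspring leaves the chain at $x$, which lies in block $(3,1)$, so rejection contributes nothing to $(0,3)$. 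For $x$ the mask is $z=111011$ and for $x'$ it is $z'=111110$. Each has weight five, so each is drawn with probability exactly $p^5(1-p)$.

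The only substantive step is the acceptance check under plus-elitism, i.e.\ whether $F(y^*)\ge F(\cdot)$. Here $F(y^*)=1+\tfrac65+\tfrac75=\tfrac{18}{5}$. For the first parent, $F(x)=a_5+a_6=\tfrac95+2=\tfrac{19}{5}$, so the gain is $-\tfrac15<0$ and the offspring is rejected, giving $P(x,(0,3))=0$. For the second parent, $F(x')=a_4+a_5=\tfrac85+\tfrac95=\tfrac{17}{5}$, so the gain is $+\tfrac15>0$ and the offspring is accepted, giving $P(x',(0,3))=p^5(1-p)$. This is strictly positive for every $0<p<1$, so the masses differ and the partition is not lumpable for any admissible rate. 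The only delicate point is this arithmetic: the weights must be chosen so that the same block-level move has opposite-sign fitness gains from the two parents. I would state the gains as exact rationals, consistent with the paper's exact-arithmetic convention. I would also observe that the weights are positive, so $F$ lies in the positive linear family under consideration.
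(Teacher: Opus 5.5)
Your proposal is correct and follows the paper's own argument, which is the direct verification written into the statement: block $(0,3)$ is the singleton $\{111000\}$, each parent reaches it by one weight-five mask of probability $p^5(1-p)$, and the gains $\tfrac{18}{5}-\tfrac{19}{5}=-\tfrac15$ and $\tfrac{18}{5}-\tfrac{17}{5}=\tfrac15$ give masses $0$ and $p^5(1-p)>0$. The paper writes these gains as $a_j-\tfrac95$, with $j$ the surviving zero of the high group. Your Kemeny--Snell criterion and your remark that a rejected offspring stays in block $(3,1)$ make explicit what the statement leaves implicit.
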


\begin{theorem}[Block-interval double-residual certificate]\label{thm:block}
Fix a partition and weight bounds $l_j\le a_i\le h_j$ ($i\in G_j$), and any
rational tables $v,u$ with $v_0=u_0=0$. For every block $k\ne0$, one can
compute rational intervals $I_r(k),I_s(k)$ \emph{without enumerating the
underlying states}, such that the residuals of every state $x$ in the block
satisfy $r_x\in I_r(k)$ and $s_x\in I_s(k)$; consequently
\[
\|r\|_W\le R,\quad \|s\|_W\le S,\quad
|U_x-u_k|\le W(x)\,E,\quad E=\frac{C\,R}{\alpha^2}+\frac{S}{\alpha},
\]
where $W(x)=\sum_{i:\,x_i=0}a_i$, $\alpha=p(1-p)^{n-1}$ (contraction; proof is
the weighted version of \cref{lem:contraction}), and
$C=\frac{\E|K-c|}{c(1-p)}\le2$. If $u_k+E\,W_+(k)<0$ (with
$W_+(k)=\sum_j h_j k_j$), then $U_x<0$ for every state in the block; the
intervals contain the residuals of all states and do not assume lumpability.
\end{theorem}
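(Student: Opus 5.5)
The argument has two independent parts. The first is the analytic error bound $|U_x-u_k|\le W(x)E$, which depends only on the norms $R,S$. The second is the combinatorial construction of the intervals $I_r(k),I_s(k)$ from block data, which yields those norms.

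\textbf{Analytic part.} I start with the weighted contraction $QW\le(1-\alpha)W$ for $W(x)=\sum_{i:x_i=0}a_i$. From a state with zero-set $Z_0$, each event ``flip exactly the zero bit $i\in Z_0$ and nothing else'' is accepted. It has probability $p(1-p)^{n-1}$ and lowers $W$ by $a_i$. Elitism and positivity of the weights imply that $W$ never increases along accepted steps, since $F=\sum a_i-W$. Summing over $i\in Z_0$ gives $W-QW\ge\alpha W$, exactly as in \cref{lem:contraction}, and hence $\|N\|_W\le1/\alpha$.

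The exact identities are $(I-Q)H=\mathbf 1$ and $(I-Q)U=DH$, the latter obtained by differentiating. Writing $v$ and $u$ for the lifts $v(x)=v_{k(x)}$ and $u(x)=u_{k(x)}$, they give
\[
H-v=-Nr,\qquad U-u=N\bigl(D(H-v)-s\bigr)=-NDNr-Ns .
\]
It then remains to bound $\|D\|_W\le C$. Row $x$ of $D$ is obtained from the mask measure by $\partial_c\mu(z)=\mu(z)\frac{|z|-c}{c(1-p)}$, as in \cref{thm:mr}. Acceptance is $p$-independent, so
\[
\sum_y|D_{xy}|W(y)\le W(x)\,\E\Bigl|\tfrac{K-c}{c(1-p)}\Bigr| ,
\]
using $W(y)\le W(x)$ for accepted $y$ (including self-loops). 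This yields $\|D\|_W\le C$. The bound $C\le2$ follows from $\E|K-c|\le\E K+c=2c$ together with $1-p\ge\tfrac12$ if needed, or simply reading $C$ as defined. Combining the three bounds gives $\|U-u\|_W\le CR/\alpha^2+S/\alpha=E$.

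\textbf{Sign conclusion.} For $x$ in block $k$, we have $W(x)\le\sum_j h_jk_j=W_+(k)$. Hence $U_x\le u_k+EW_+(k)<0$.

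\textbf{Interval construction (main obstacle).} Here I write
\[
r_x=v_k-1-\sum_y Q_{xy}v_{k(y)} .
\]
The mask distribution factorizes over groups. A mask is described by the counts $(a_j,b_j)$ of zero bits and one bits flipped in group $j$. Each such count vector has known total probability $\prod_j\binom{k_j}{a_j}\binom{n_j-k_j}{b_j}p^{a_j+b_j}(1-p)^{n_j-a_j-b_j}$ and leads to a known target block $k'$ with $k'_j=k_j-a_j+b_j$.

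What depends on the actual state is only whether the mask is accepted. Acceptance holds when the fitness gain is nonnegative, and the gain lies in the interval
\[
\Bigl[\sum_j(l_ja_j-h_jb_j),\;\sum_j(h_ja_j-l_jb_j)\Bigr].
\]
If the lower end is $\ge0$, the mask is surely accepted. If the upper end is $<0$, it is surely rejected. Otherwise it is ambiguous, and its mass may go either to $k'$ or to the self-loop at $k$.

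For each count vector I therefore take the contribution $\mu\cdot v_{k'}$ or $\mu\cdot v_k$ in the determined cases. In the ambiguous case I take the interval hull $\mu\cdot[\min,\max](v_{k'},v_k)$. Summing these with interval arithmetic gives $I_r(k)$, which contains $r_x$ for every $x$ in block $k$. The enumeration runs over count vectors, which are polynomially many for fixed $d$, and never over states.

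For $s_x=(I-Q)u-Dv$, the same per-count-vector case split applies. The $D$-term uses the weights $\mu\frac{\sum(a_j+b_j)-c}{c(1-p)}$ multiplying $(v_{k'}-v_k)$ when accepted and $0$ when rejected, where the self-loop absorbs the complementary mass. Ambiguous masks again contribute the hull of both options.

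Finally I set $R=\max_k\max|I_r(k)|/W_-(k)$ with $W_-(k)=\sum_j l_jk_j$, and define $S$ similarly. This gives $\|r\|_W\le R$ and $\|s\|_W\le S$. I expect the careful sign bookkeeping in the ambiguous-mask hull for the $D$-term to be the delicate step, because the derivative weights change sign at $|z|=c$.
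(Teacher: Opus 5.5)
Your proposal is correct in its main line and follows the paper's proof. You enumerate the count classes $(A_j,B_j)$ of each block and classify each class by the gain interval $[L,V]$ as surely accepted, surely rejected, or mixed. You use that acceptance does not depend on $c$, so that $(Dv)_x=\sum\rho_x\mu'\Delta v$. You take the hull $[0,1]\cdot b$ for mixed classes, and normalize by $W_-(k)$ for $R,S$ and by $W_+(k)$ for the sign radius. Your analytic half ($H-v=-Nr$, $U-u=-NDNr-Ns$, $\|N\|_W\le 1/\alpha$, $\|D\|_W\le C$) spells out what the paper only states in \cref{rem:certification}, and it is correct.

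The sign bookkeeping you expect to be delicate disappears if you treat each mixed class's whole contribution $-\rho_x(\mu\Delta u+\mu'\Delta v)$ with one common $\rho_x\in[0,1]$, as the paper does. The same masks are accepted in both terms, so this is also slightly tighter than hulling the $Qu$ and $Dv$ parts separately.

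\textbf{The step that fails is your justification of $C\le 2$.} The bound $\E|K-c|\le\E K+c=2c$ only gives $C\le 2/(1-p)$, which for $p\le\frac12$ is at most $4$, not $2$. This does not affect the certificate itself, since $E$ is defined with $C$ rather than with the bound. However, $C\le2$ is part of the statement and is used with the constant $2$ in the proof of \cref{thm:uniform}.

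A correct argument goes as follows. Since $\E K=c$, we have $\E|K-c|=2\,\E(c-K)^+$. De Moivre's telescoping identity
\[
\sum_{j=0}^{m}(np-j)\binom{n}{j}p^j(1-p)^{n-j}=n\binom{n-1}{m}p^{m+1}(1-p)^{n-m},
\]
applied with $m=\lfloor c\rfloor$, then gives
\[
C=2\binom{n-1}{\lfloor c\rfloor}p^{\lfloor c\rfloor}(1-p)^{n-1-\lfloor c\rfloor}=2\Pr\bigl[\mathrm{Bin}(n-1,p)=\lfloor c\rfloor\bigr]\le 2
\]
for every $0<c<n$. For $c<1$ this is simply $C=2(1-p)^{n-1}$. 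This also removes the hedging in \cref{rem:cbound}.
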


\begin{proof}[Proof sketch]
Fix a block $k$ and a count class $(A_j,B_j)$ (numbers of flipped zero/one
bits per group). All masks in the class have the same mass
$\mu=\gamma p^t(1-p)^{n-t}$ with
$\gamma=\prod_j\binom{k_j}{A_j}\binom{n_j-k_j}{B_j}$, and derivative
$\mu'=\mu\frac{t-c}{c(1-p)}$. Key observations: (i) the true fitness gain of a
state in the class lies in
$[L,V]=[\sum_j(A_j l_j-B_j h_j),\,\sum_j(A_j h_j-B_j l_j)]$, so the accepted
fraction $\rho_x$ lies in one of $J_{AB}=[1,1]/[0,0]/[0,1]$ (determined by
$L\ge0$, $V<0$, or the mixed case); (ii) \emph{the weights are fixed, so
$\rho_x$ is independent of $c$}, and $(Dv)_x=\sum\rho_x\mu'\Delta v$ contains
no $\rho'$ term; (iii) classes with $k'=k$ have $\Delta v=\Delta u=0$ and
contribute to neither residual (intra-block rearrangements do not affect the
residuals of lifted tables). Hence
\[
r_x=-1-\sum_{A,B}\rho_x\,\mu\,\Delta v,\qquad
s_x=-\sum_{A,B}\rho_x\,(\mu\,\Delta u+\mu'\,\Delta v),
\]
and interval arithmetic over $\rho_x\in J_{AB}$ yields $I_r,I_s$ (note
$[0,1]\cdot b=[\min(0,b),\max(0,b)]$). Take $R,S$ as $\sup|I|/W_-(k)$ with
$W_-(k)=\sum_j l_j k_j\le W(x)$; the sign radius uses
$W_+(k)\ge W(x)$.
\end{proof}

\paragraph{Optional tightening.}
After sorting the weights within each group, replace the coarse bounds by sums
of the $q$ smallest/largest weights $P_j(q),T_j(q)$ (giving
$L^*,V^*,W_-^*,W_+^*$); the envelope only tightens. This tightened mode is
proven and can be enabled optionally.

\begin{remark}[On the constant $C\le2$]\label{rem:cbound}
The bound $C=\E|K-c|/(c(1-p))\le2$ depends in general on the range of
$c(1-p)$: when $c(1-p)<1/4$, a Cauchy--Schwarz estimate yields a looser bound.
In both applications of this paper ($\cref{sec:uniform,sec:hetero}$) the value
was verified numerically as $C\approx0.80$, so the bound holds in the range
where the conclusions are needed.
\end{remark}

\paragraph{Comparison with fixed-point certificates in model checking.}
The closest methodological analogue of \cref{thm:block} is the fixed-point
certificates of Chatterjee et al.\ \cite{chatterjee2025} (residuals $\to$
contraction $\to$ envelopes $\to$ certificates). Three differences:
(i) their certificates certify \emph{magnitude bounds} (reachability
probabilities / expected rewards), whereas ours certify the \emph{sign of a
derivative}; (ii) ours is a double-residual structure for two \emph{coupled}
fixed-point systems $H$ and $U=H'$, theirs a single system; (iii) the
application targets whole parameter intervals $\times$ all scales, not single
instances. Relatedly, the impossibility results of Doerr--Johannsen--Winzen
\cite{djw2012nonex} show that no universal linear drift function exists for
$c>2.2$ (their counterexample is $\BV$); that obstruction concerns drift
\emph{functions}, whereas the present certificates concern block
\emph{partitions} and bypass lumpability rather than assuming it.

\section{A Uniform Whole-Interval Sign Bound}\label{sec:uniform}

\begin{theorem}[Whole interval, all scales]\label{thm:uniform}
Let $n=2m\ge8$ be even, with $m$ low-weight bits of weight $1$ and $m$
high-weight bits $a_i=b+\delta_i$, where $b=\frac{m}{m-1}$ and
$\delta_i=\frac{2i-m-1}{8m(m-1)^2}$ (so $\sum\delta_i=0$ and
$|\delta_i|\le\eta=\frac{1}{8m(m-1)}$). Take $d=2$ groups and as proxy table
the exact level solution of the same-dimension $\OM$. Then for every
non-target state $x$ and every real $c\in[1/4,\tfrac12]$,
\[
\pc\,\E_x[T]\ \le\ -\frac{9n}{16}\ <\ 0.
\]
The proxy and the intervals can be constructed with bit complexity polynomial
in $n$ and in the encoding length of $c$. Non-lumpability of this family is
witnessed by explicit masks of \cref{prop:nonlump} type.
\end{theorem}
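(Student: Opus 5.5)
We plan to apply the block-interval certificate of \cref{thm:block} with $d=2$ groups (low bits of weight $1$, high bits of weight $b+\delta_i$). The proxy tables are the exact $\OM$ level solution $v_k:=H^{\OM}_{|k|}$ and $u_k:=U^{\OM}_{|k|}$, lifted to blocks through the total distance $|k|=k_1+k_2$. The key structural fact is that the weights are almost equal. Since $b=\frac{m}{m-1}$ and $|\delta_i|\le\eta$, a count class $(A_j,B_j)$ has mixed acceptance status only if $\sum_j(A_j-B_j)$ does not already determine the sign of the gain. Concretely, the gain is $(A_1-B_1)+b(A_2-B_2)+O(\eta t)$. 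If $A_1+A_2>B_1+B_2$, then $L\ge0$, except when the high-group surplus is negative. We would enumerate exactly the classes where the coarse bounds $l_j,h_j$ leave $\rho_x\in[0,1]$. These are the ``balanced'' classes with $A_1-B_1=-b(A_2-B_2)$ up to $O(\eta t)$. Because $b=m/(m-1)$ is not an integer ratio for small $t$, such near-ties require $t\ge m$ or so. Their mass is therefore $O(p^{m})$, which is tiny for $c\le1/2$.

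The residual $r$ is computed next. On a class where acceptance agrees with $\OM$ (gain sign equals the sign of the change in zero-count), the $\OM$ equation holds exactly for the lifted $v$. Hence $r_x$ and $s_x$ reduce to sums over disagreeing or mixed classes only. Each such term is bounded by $\mu|\Delta v|$, resp.\ $\mu|\Delta u|+|\mu'||\Delta v|$. For this we use $H^{\OM}_k\le k/\alpha$ (\cref{lem:contraction}) and a crude bound on $|U^{\OM}|$. The goal is $R,S=O(c^{m}/n^{m-2})$ uniformly in $c\in[1/4,1/2]$. Then $E=CR/\alpha^2+S/\alpha$ with $\alpha\ge\frac{c}{n}e^{-c}(1-o(1))$ and $C\le2$, which gives $E=O(n^{-(m-4)})$. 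This is far below the margin needed once $m\ge4$. Here the hypothesis $n\ge8$ should enter, possibly with the smallest cases $m=4,5$ checked by the exact formulas.

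The main estimate is the margin from the proxy. We need $u_k+EW_+(k)\le-\frac{9n}{16}$, and we aim for $U^{\OM}_j\le-\frac{5n}{8}$, say, for all $j\ge1$ and $c\in[1/4,1/2]$. From \cref{lem:aggregation} and the proof of \cref{thm:onemax}, $U_j=\sum\bar N_{jl}f_l$ with $\bar N_{jj}\ge1$. It suffices to bound $f_1=q'_{10}(H_0-H_1)=-H_1q_{10}\frac{1-c}{c(1-p)}$. Using $H_1=1/\alpha$, this equals $-\frac{1-c}{c(1-p)}$, which is at least $1$ in absolute value but not of order $n$. The order-$n$ contribution must instead come from $\bar N_{jj}\approx 1/e_j$. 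Since $e_1=\alpha\sim c e^{-c}/n$, we get $U_1=\frac{c-1}{c(1-p)}H_1\approx-\frac{(1-c)e^{c}}{c^2}n$. At $c=1/2$ this is about $-3.3n$. For general $j$, we would show $U_j\le U_1$ or at least $U_j\le f_1\bar N_{j1}$ with $\bar N_{j1}\ge\bar N_{11}=1/e_1$. The latter holds because the chain from $j$ must pass through level $1$ or jump directly to $0$; the direct jumps need a separate lower bound via $f_j\le0$ from \cref{lem:levelorder}.

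The hardest step is this last comparison: proving $U_j\le -cn$ uniformly in $j$, including from high levels where direct jumps to $0$ bypass level $1$. The constant $9/16$ must then survive the subtraction of $EW_+(k)$, where $W_+(k)\le (b+\eta)n$. We would first try the inequality $U_j\le \Pr_j(\text{visit level }1)\,U_1$ plus nonpositive terms. A visit probability is bounded below by a constant (e.g.\ $\ge 1-O(c)$ for $c\le1/2$, since skipping level $1$ needs a multi-bit improving flip). This would give $U_j\le -\tfrac12\cdot 3n\cdot$const. Comparing to $9n/16$ then leaves ample room.
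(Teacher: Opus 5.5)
Your architecture matches the paper's. You apply \cref{thm:block} with the $\OM$ level solution lifted through $|k|=k_1+k_2$, the residuals vanish except on tie classes, and the margin comes from $U_j\le\Pr_j(\text{hit level }1)\,U_1$. The gap is the tie-class step. You leave it qualitative, and as written it does not give the theorem.

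First, ``$t\ge m$ or so'' does not yield mass $O(p^m)$. Without identifying the classes, the multiplicities $\gamma=\prod_j\binom{k_j}{A_j}\binom{n_j-k_j}{B_j}$ are uncontrolled, and you only get the bound $\Pr(K\ge m)\approx c^m/m!$. Second, even granting your estimate, $E=O(n^{-(m-4)})$ is $O(1)$ at $m=4$. Then $E\,W_+(k)$ is of the same order $n$ as the margin, not ``far below'' it. The statement asserts the explicit constant $-\frac{9n}{16}$ for every even $n\ge8$ and every real $c\in[\frac14,\frac12]$. Deferring $m=4,5$ to ``exact formulas'' is therefore not a proof without an interval argument in $c$.

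The missing idea is that the tie set can be computed exactly. Put $a=A_1-B_1$, $h=A_2-B_2$, $q=a+h$. The base gain $g=a+bh$ satisfies $(m-1)g=(m-1)a+mh\in\mathbb Z$, while the $\delta$-perturbation is at most $m\eta=\frac1{8(m-1)}<\frac1{m-1}$. Hence:
\begin{itemize}
\item every class with $g\ne0$ is decided;
\item because $|a|,|h|\le m$, a nonzero $g$ has the sign of $q$ whenever $q\ne0$, so the class agrees with $\OM$;
\item classes with $q=0$ contribute nothing, since the lifted tables give $\Delta v=\Delta u=0$;
\item if $g=0\ne q$, then $(m-1)a=-mh$, and coprimality of $m-1$ and $m$ forces $(a,h)=\pm(m,1-m)$.
\end{itemize}
These last classes have $t=n-1$ and arise only from the blocks $(m,0),(m,1),(0,m-1),(0,m)$. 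They have $\gamma\le m$, so their mass is at most $np^{n-1}(1-p)$. With non-asymptotic bounds ($\alpha\ge\frac1{8n}$ by Bernoulli, $p\le\frac1{2n}$) this gives $E\le 14336\,n^4/(2n)^{n-1}$. That bound equals $\frac7{32}$ at $n=8$ and decreases in $n$.

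The margin step is the paper's step (iii), but it needs two repairs. Your intermediate claim $\bar N_{j1}\ge\bar N_{11}$ is backwards. The level chain never returns to a level it has left, so $\bar N_{j1}=\Pr_j(\text{hit level }1)\,\bar N_{11}\le\bar N_{11}$.

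The inequality you end with, $U_j\le\Pr_j(\text{hit level }1)\,U_1$, is correct, since the other $f_l$ are $\le0$ as in the proof of \cref{thm:onemax}. Your lower bound on the hitting probability, however, is only heuristic. The one-line justification is $q_{l1}\ge l\,p^{l-1}(1-p)^{n-l+1}\ge p^l(1-p)^{n-l}=q_{l0}$ for $l\ge2$. So the first entry into $\{0,1\}$ lands on level $1$ with probability at least $\frac12$. Combined with $U_1=-\frac{(1-c)n}{c^2(1-p)^n}\le-2n$ on $[\frac14,\frac12]$, this gives $u_k\le-n$. Using $W_+<2n$, you then get $-n+2n\cdot\frac7{32}=-\frac{9n}{16}$.

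Finally, you do not address the non-lumpability claim, though it follows immediately from the tie classes. Take a parent in block $(m,1)$ whose zero high bit is bit $i$. The unique tie mask into block $(0,m)$ has true gain exactly $\delta_i$, which is negative for $i=1$ and positive for $i=m$.
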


\begin{proof}[Proof sketch]
(i) With the base weights $(1,b)$, the gain
$g=q+\frac{h}{m-1}$ (where $q=A_l-B_l+A_h-B_h$ is the total zero-count
decrease) satisfies $|g|\ge\frac{1}{m-1}$ whenever $g\ne0$, while the total
weight deviation is at most $m\eta=\frac{1}{8(m-1)}$; hence, \emph{except for
classes with $g=0$}, the true acceptance decisions coincide exactly with
$\OM$ (accept iff $q>0$). Classes with $g=0$ and $q\ne0$ force
$(a,h)=(m,1-m)$ and its opposite type, with total flip count $=n-1$, and can
arise only from the four parent blocks $(m,0),(m,1),(0,m-1),(0,m)$.
(ii) The proxy table satisfies the $\OM$ chain equations, so the residual
contributions outside these classes are \emph{exactly zero}; the remaining
contributions come only from the four blocks, the class mass is at most
$M_n=n\,p^{n-1}(1-p)$, and the differences are controlled by
$\|h\|\le n/\alpha$ and $\|u\|\le 2n/\alpha^2$, giving $R\le 4M_n/\alpha$ and
$S\le M_n\bigl[\frac{16}{\alpha^2}+\frac{4n}{c(1-p)\alpha}\bigr]$.
(iii) Uniform negative margin of the proxy: from the level order and from
``the probability of landing in level $1$ upon first entry into levels
$\{0,1\}$ is at least $\tfrac12$'' (since $q_{j1}\ge q_{j0}$ for $j\ge2$) one
obtains $u_K\le-\frac{\alpha'}{2\alpha^2}\le -n$ for $c\in[1/4,\tfrac12]$.
(iv) Combining: $\varepsilon\le
M_n\bigl[\frac{24}{\alpha^3}+\frac{4n}{c(1-p)\alpha^2}\bigr]
\le\frac{14336\,n^4}{(2n)^{n-1}}\le\frac{7}{32}$ (Bernoulli's inequality gives
$\alpha\ge\frac{1}{8n}$; the ratio
$\frac{B_{n+1}}{B_n}\le\frac{(9/8)^4}{18}<1$). Together with $W_+<2n$:
$U_x\le -n+2n\cdot\frac{7}{32}=-\frac{9n}{16}$.
\end{proof}

\begin{remark}[Relation to the leading term $e^c/c$; tightness of the
constants]\label{rem:leadingterm}
The leading term $e^c/c$ of the expected optimization time is decreasing in
$c<1$ asymptotically for \emph{all} linear functions \cite{witt2013}, but the
shape of an asymptotic leading term implies nothing about the derivative sign
at any finite-$n$ state: lower-order terms can reverse the sign at any given
state. \cref{thm:uniform} adds three things simultaneously: finite $n$,
\emph{exact} derivatives, and a \emph{non-lumpable} family. We note that the
bounding constant $B_n=\frac{14336n^4}{(2n)^{n-1}}$ satisfies $B_8=\frac7{32}$
exactly, and several steps of the chain are deliberately conservative (e.g.,
the bound $\|u\|\le 2n/\alpha^2$), so the tightest admissible certificate
radius at $n=8$ is in fact smaller than $7/32$. All finite checks are
nevertheless carried out in exact rational arithmetic, since a transcription
error in any constant would silently weaken the bound.
\end{remark}

\section{A Heterogeneous Instance Certificate Across $c=1$}\label{sec:hetero}

\begin{theorem}[Finite instance, across $c=1$]\label{thm:hetero}
Take $n=6$, weights $(1,\tfrac65,\tfrac75,\tfrac85,\tfrac95,2)$, two groups of
three bits each, and as proxy table the exact chain solution for the group
means $(\tfrac65,\tfrac95)$. Then over the \emph{entire} rational parameter
cell $c\in[\tfrac{1023}{1024},\tfrac{1025}{1024}]$, all $57$ states whose
zero-count is at least $2$ satisfy
\[
U_x(c)\ \le\ -\frac16\ <\ 0 .
\]
\end{theorem}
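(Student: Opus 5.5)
The plan is to instantiate the block-interval certificate of \cref{thm:block} on the $n=6$ instance of \cref{prop:nonlump}. We use the partition $G_1=\{1,2,3\}$, $G_2=\{4,5,6\}$ and bounds $l_1=1,h_1=\tfrac75$, $l_2=\tfrac85,h_2=2$. Alternatively, with the optional tightening, we use the sorted within-group partial sums $P_j(q),T_j(q)$. The proxy tables $v,u$ are the exact hitting-time solution and its $c$-derivative for the \emph{lumpable} two-group linear function with weights $\tfrac65$ on $G_1$ and $\tfrac95$ on $G_2$. That chain lumps exactly by block label $k=(k_1,k_2)\in\{0,\dots,3\}^2$: all states in a block are permuted into each other by within-group symmetries. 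So $v_k$ solves a $15$-unknown triangular-ish linear system whose entries are polynomials in $c$. We then set $u=(I-\bar Q)^{-1}\bar D v$ for the lumped kernel $\bar Q$ and its derivative $\bar D$.

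\textbf{Step 1 (pointwise certificate).} At a rational point $c$, for each block $k\neq 0$ with $k_1+k_2\ge2$, enumerate the count classes $(A_j,B_j)$. For each class, classify the acceptance fraction $\rho_x\in J_{AB}$ by the signs of $L$ and $V$ (or $L^*,V^*$). Then form $I_r(k)$ and $I_s(k)$ exactly as in the proof sketch of \cref{thm:block}, giving $R$, $S$, and $E=CR/\alpha^2+S/\alpha$. The bound $C\le2$ is safe here, since $c(1-p)\approx 0.83>1/4$ (cf.\ \cref{rem:cbound}). We must check $u_k+E\,W_+(k)\le-\tfrac16$ for the blocks covering the $57$ states. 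These are the $\binom62+\dots+\binom66=57$ states of zero-count $\ge2$. Distance-one states are excluded because, as in \cref{thm:onemax}, $U$ is near $0$ there at $c\approx1$.

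\textbf{Step 2 (whole cell).} To pass from points to the cell $c\in[\tfrac{1023}{1024},\tfrac{1025}{1024}]$, every quantity is a rational function of $c$. Mass terms $\mu=\gamma p^t(1-p)^{n-t}$ and $\mu'$ are monotone-bounded on the cell: bound $p^t$ and $(1-p)^{n-t}$ by their values at the endpoints, and bound $t-c$ by the interval $[t-\tfrac{1025}{1024},t-\tfrac{1023}{1024}]$. The proxy $v_k(c)$ and $u_k(c)$ require more care. Either we evaluate $u_k$ and $v_k$ in interval arithmetic via back-substitution of the lumped system with interval entries, or we bound $|u_k'(c)|$ on the cell and use $u_k(c)\le u_k(1)+\tfrac{1}{1024}\sup|u_k'|$. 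The key point is that the classification $J_{AB}$ does not depend on $c$ (observation (ii) in \cref{thm:block}), so only the magnitudes carry interval widths. Finally, $\alpha=p(1-p)^5$ is bounded below by its minimum at an endpoint.

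\textbf{Main obstacle.} The hard part is the margin. The mixed classes $J_{AB}=[0,1]$ are exactly where the group-mean proxy disagrees with the true weights. Their mass includes low-order masks with $t=2$ or $3$ that swap a zero in one group with a one in the other, giving mass about $p^2\approx 1/36$, not merely $p^5$. Dividing by $\alpha^2\approx 4\cdot10^{-3}$ can blow $E$ up well beyond $|u_k|$. I would first try the tightened sorted-sum bounds, which eliminate many mixed classes. If the margin still fails, I would refine the partition only on the offending blocks, for example by splitting a group into singletons, since \cref{thm:block} allows any partition. Failing that, I would fall back to the direct per-state double residual on the $63$ states: solve the exact $63$-state system at the cell endpoints and bound the variation in between. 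This provides the independent ``second path'' mentioned in \cref{tab:overview}. All arithmetic is exact rationals, so the final check is a finite list of inequalities, one per block or state.
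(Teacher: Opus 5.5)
Your plan is, in outline, the paper's certificate. It uses the same two groups, the same group-mean lumped proxy, the tightened within-group sums, interval enclosures of the class masses $\mu,\mu'$ over the cell, and a margin check on the $13$ blocks with $k_1+k_2\ge2$.

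\textbf{Where your route differs.} The difference is how you treat the $c$-dependence of the proxy.
\begin{itemize}
\item The paper freezes the proxy table, so $v,u$ are fixed rationals and only $\mu(c),\mu'(c),\alpha,C$ vary. Interval Horner then gives $R_{\mathrm{box}}\le\frac1{320}$, $S_{\mathrm{box}}\le\frac1{220}$, $\alpha_{\mathrm{lo}}\ge\frac1{15}$ and $C_{\mathrm{box}}\le\frac{81}{100}$, hence $E_{\mathrm{box}}<\frac23$. The conclusion is then the single rational check $-u_k\ge\frac34W_+^*(k)$ per block.
\item You keep $v(c),u(c)$ exact at every $c$. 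At each $c$ the residuals then reduce to the mixed classes: $r_x=-\sum_{\mathrm{mixed}}(\rho_x-\bar\rho)\,\mu\,\Delta v$, with $\bar\rho\in\{0,1\}$ the group-mean decision, and similarly for $s_x$. This gives smaller $R,S$.
\item The price is that you must enclose $v_k(c)$, $u_k(c)$ and the margin $u_k(c)+E(c)W_+(k)$ over the cell. You must also use the reduced formula symbolically: evaluating the full class sum with interval-valued $v,u$ discards exactly this cancellation.
\end{itemize}
Both routes work; freezing is less bookkeeping.

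\textbf{First fix: $R$ and $S$ must be global.} $R$ and $S$ are the norms $\|r\|_W,\|s\|_W$, and the error $U-u=-NDNr-Ns$ carries residuals from every state reachable from $x$. So they must be maximized over all $15$ non-target blocks, including $(1,0)$ and $(0,1)$, not only over $k_1+k_2\ge2$ as your Step~1 says. This is not vacuous. With the coarse bounds, block $(0,1)$ has the spurious mixed class $(A_1,B_1,A_2,B_2)=(0,2,1,0)$, of mass $3p^3(1-p)^3$, into block $(2,0)$.

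\textbf{Second fix: the obstacle is misdiagnosed.} A single cross-group swap is never mixed. The weight ranges $[1,\frac75]$ and $[\frac85,2]$ are disjoint, so the true weights and the group means decide such a swap identically. With the tightened sums, every mixed class with $k'\ne k$ has $t\ge4$. What actually makes the margin tight is the factor $1/\alpha^2\approx225$ on $R$. In the paper's frozen route, your crude $C\le2$ would give $E\approx1.47>\frac34$, and the final inequality would not close. The computed $C_{\mathrm{box}}\le\frac{81}{100}$ is therefore essential. No partition refinement or per-state fallback is needed.
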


\begin{proof}[Proof sketch]
Freeze the proxy table; apply rational-interval Horner expansions to the
polynomials $\mu(c)=\gamma p^t(1-p)^{n-t}$ of each class and their
derivatives, yielding residual envelopes over the whole cell:
$R_{\mathrm{box}}\le\tfrac{1}{320}$, $S_{\mathrm{box}}\le\tfrac{1}{220}$,
$\alpha_{\mathrm{lo}}\ge\tfrac{1}{15}$, $C_{\mathrm{box}}\le\tfrac{81}{100}$,
hence
$E_{\mathrm{box}}\le\tfrac{81}{100}\cdot\tfrac{225}{320}+\tfrac{15}{220}
=\tfrac{8979}{14080}<\tfrac23$. An independent re-checker (not importing the
main implementation) enumerates all $63$ zero-count states and $64$ masks,
verifies that the $126$ lifting equations of the proxy equations are exactly
zero, and independently confirms the above bounds. Thirteen blocks of the
table satisfy the margin condition $-u_k\ge\tfrac34 W_+^*(k)$, and thus
$U_x\le u_k+W_+^*E_{\mathrm{box}}
\le-\tfrac34W_+^*+\tfrac23W_+^*=-\tfrac{W_+^*}{12}\le-\tfrac16$ (zero-count
$\ge2$, each weight $\ge1$). The remaining $6$ single-zero states are not
covered by the conclusion. The non-lumpability witness is
\cref{prop:nonlump}.
\end{proof}

\begin{remark}\label{rem:cellwidth}
Blocks whose sign has been certified at a fixed $c_0$ keep their sign over
cell sequences of width $\to0$ (the envelopes converge continuously to the
pointwise values); no uniform width bound is given. Cases where derivatives
disagree in sign within a single block do exist ($\BV$, $n=8$, $c=0.999$), so
block-level envelopes are intrinsically limited for coarser partitions.
\end{remark}

\paragraph{Why only the $57/63$ states with zero-count $\ge2$.}
The restriction is structural, not technical: single-zero-bit states have an
indeterminate derivative sign across $c=1$. This corresponds exactly to the
stationary-point conclusion $U_1(1)=0$ of \cref{thm:onemax} in
\cref{sec:onemax}: at $c=1$ the distance-one state's derivative vanishes on
$\OM$, and in heterogeneous instances it can take either sign, so no uniform
negative bound across $c=1$ can cover them.

\section{Related Work}\label{sec:related}

\paragraph{Optimal mutation rates on OneMax.}
M\"uhlenbein~\cite{muehlenbein1992} showed that an upper bound on the expected
convergence time of a simplified chain is minimized at $p=1/n$ (with value
$e\cdot n\log n$); B\"ack~\cite{baeck1993} validated this approximation
numerically and advocated adaptive mutation rates; Garnier, Kallel, and
Schoenauer~\cite{gks1999} gave an asymptotic expansion
$\E[T]\approx l\,(e^c/c\cdot\ln l+R(c))$ of the expected hitting time under
random initialization with mutation rate $c/n$, observing that the leading
coefficient $e^c/c$ is minimized at $c=1$. Witt~\cite{witt2013} and
Sudholt~\cite{sudholt2013} made the asymptotic optimality of $p=1/n$ rigorous
and extended it to all linear functions. Hwang et al.~\cite{hwang2018} derived
per-initial-state exact means (rational functions in $n$) and full asymptotic
expansions at $p=1/n$, but explicitly omitted the dependence on $c$. All of
these works either target asymptotic leading terms or fix $p=1/n$; a
per-initial-state, finite-$n$ rigorous analysis of the expected hitting time
as a continuous function of the mutation rate has not appeared before.

\paragraph{State-dependent optimal rates.}
State-dependent optimal rates that maximize the one-step improvement
probability go back to M\"uhlenbein~\cite{muehlenbein1992} and
B\"ack~\cite{baeck1993}. Buskulic and Doerr~\cite{buskulicDoerr2021} and
Buzdalov and Doerr~\cite{buzdalovDoerr2020} compute optimal dynamic
(per-fitness-level) mutation rates numerically via dynamic programming and
prove that drift maximization is not time-optimal; Doerr, Doerr, and
Yang~\cite{ddy2020} place fitness-dependent rates optimally within the unary
unbiased black-box model. The present results are orthogonal: we fix a
\emph{static} mutation rate and study the \emph{per-initial-state derivative
sign} of the expected hitting time, deducing by rigorous theorems (not
numerics) that the finite-$n$ optimal $c$ under random initialization is
strictly larger than $1$---consistent in direction with the approximate drift
conclusions of Gie{\ss}en and Witt~\cite{giessenWitt2018} for the
$(1+\lambda)$ EA and with the per-$n$ numerics of Chicano et
al.~\cite{cswa2015}, but different in model, method, and epistemic status.

\paragraph{One-step quantities and running time.}
The $1/5$ success rule and the fitness-level method demonstrate the usefulness
of one-step quantities for parameter control and bound estimation; their
systematic slack was formalized by Doerr and K\"otzing~\cite{doerrKoetzing}
via visit probabilities. Buskulic and Doerr~\cite{buskulicDoerr2021} proved
that the drift-maximizing parameter differs from the time-optimal one;
Kaufmann et al.~\cite{kllz2025} proved that $\OM$ is not the easiest function
in the sense of improvement probability (whereas in expected time it is
exactly the easiest~\cite{djw2012md}); Kaufmann et al.~\cite{klls2025} proved
that the drift-minimizing monotone function is not the hardest in time.
\cref{prop:separation} gives an exact minimal form of this folklore: two
fitness functions share the entire success-rate $p$-curve, yet their expected
hitting times are two different exact rational numbers.

\paragraph{Hitting-time derivatives and verification methods.}
The fundamental matrix and hitting times of absorbing chains are classical
\cite{kemenySnell1960}; their perturbation theory in the chain parameters goes
back to Schweitzer~\cite{schweitzer1968} and Cao and Chen~\cite{caoChen1997}
(see also~\cite{golubMeyer1986}), and explicit matrix formulas for derivatives
of absorption times appear in Caswell~\cite[Chapter~5]{caswell2019}.
Score-function and baseline theory for gradient estimation are developed
in~\cite{glynn1990,greensmith2004}. Parametric model checking has developed
exact rational solving~\cite{baier2020,storm2022}, parameter-derivative
computation~\cite{heck2022,badings2023}, and fixed-point certificates for
reachability probabilities and expected rewards~\cite{chatterjee2025}; the
last is closest in shape to our double-residual certificates, but it certifies
magnitude bounds for a single system, whereas we certify derivative signs for
the coupled pair $H$ and $U=H'$, targeting whole parameter intervals and all
scales of an EA. To our knowledge, verified numerical methods (exact rational
arithmetic, interval envelopes) have not been used systematically in runtime
analysis before.

\paragraph{Analysis of linear families.}
Hitting-time analysis of non-lumpable linear families has a rich precedent
line: Droste, Jansen, and Wegener~\cite{djw2002} introduced stochastic-order
arguments, J\"agersk\"upper~\cite{jaegerskuepper2011} combined Markov-chain
and drift analysis, Doerr, Johannsen, and Winzen developed multiplicative
drift analysis~\cite{djw2012md} and proved the non-existence of universal
linear drift functions~\cite{djw2012nonex}, Doerr and
Goldberg~\cite{doerrGoldberg2013} gave adaptive drift analysis, and
Witt~\cite{witt2013} obtained tight bounds for arbitrary~$p$. All of these
\emph{bypass} aggregation via potential functions, stochastic domination, or
drift. We do not claim that hitting-time analysis of non-lumpable linear
families is unprecedented; rather, the lumpability perspective itself is
absent from the EA runtime literature, and our explicit non-lumpability
witness (\cref{prop:nonlump}) and block-level interval double-residual
certificates (\cref{thm:block}) have no counterpart. Fourier-analytic tools
have entered runtime analysis only recently~\cite{doerrKelley2024}, through a
different entry point than the Margulis--Russo-type influence analysis used
here.

\section{Verification and Reproducibility}\label{sec:verification}

All finite checks use exact rational arithmetic (Python \texttt{fractions},
with assertions enabled), and key results are cross-validated by two
independent computation paths (matrix differentiation vs.\ the
rational-function quotient rule). Scale summary: $1380$ identity terms;
$1290$ direction checks; on $\OM$, $4408$ reconciliation checks, $4408$
contraction checks, and $5472$ certificates; on the heterogeneous instance,
$2455$ parameter--state combinations and $1475$ issued sign certificates;
certificate constructions at $n=16$ and $n=64$; and $230+133+603$ artifact
integrity audit items. Scripts, execution logs, and hash receipts are archived
with the project. Exact-arithmetic rigor is established practice in adjacent
areas---verification methods in numerical analysis~\cite{rump2010},
fraction-free exact solving of parametric Markov chains~\cite{baier2020}, and
the probabilistic model checker Storm~\cite{storm2022}---to our knowledge,
verified derivative-sign certificates of this kind have not previously been
applied systematically to the runtime analysis of evolutionary algorithms.
The certified conclusions (sign statements) depend on this exactness:
floating-point error cannot support sign assertions.

\section{Conclusion and Open Problems}\label{sec:conclusion}

We analyzed the expected hitting time of the $(1+1)$ EA as a continuous
function of the mutation rate: a signed Margulis--Russo formulation with the
monotonicity pitfall made explicit, an exact separation of one-step quantities
from running time, a per-initial-state sign theorem on $\OM$, and
block-interval double-residual certificates that extend to non-lumpable
families, including a sign bound uniform over a whole parameter interval and
all even scales, and a finite-instance certificate across $c=1$. Open
problems: (a) extend the sign theorem of \cref{sec:onemax} to larger $c$
intervals; (b) characterize the intrinsic limits of block-level envelopes
(instances with same-block sign disagreement, cf.\ \cref{rem:cellwidth}) and
the trade-off with finer partitions; (c) extend the certificate framework to
other parameters (population size $\lambda$, fitness-dependent rates) and to
other non-lumpable benchmarks; (d) connect the framework to parametric
model-checking toolchains.

\section*{Declaration on the Use of AI Tools}
AI tools (large language models) were used to assist in drafting and
polishing this manuscript. All mathematical statements and proofs were
developed and verified under the author's supervision, and the author takes
full responsibility for the content of this article.

\bibliographystyle{plain}
\bibliography{refs}

\begin{thebibliography}{10}

\bibitem{baeck1993}
Thomas B{\"a}ck.
\newblock Optimal mutation rates in genetic search.
\newblock In {\em Proceedings of the Fifth International Conference on Genetic
  Algorithms (ICGA-93)}, pages 2--8. Morgan Kaufmann, 1993.

\bibitem{badings2023}
Thom Badings, Sebastian Junges, Ahmadreza Marandi, Ufuk Topcu, and Nils Jansen.
\newblock Efficient sensitivity analysis for parametric robust {M}arkov chains.
\newblock In {\em Computer Aided Verification (CAV 2023)}, volume 13966 of {\em
  Lecture Notes in Computer Science}, pages 62--85. Springer, 2023.

\bibitem{baier2020}
Christel Baier, Christian Hensel, Lisa Hutschenreiter, Sebastian Junges,
  Joost-Pieter Katoen, and Joachim Klein.
\newblock Parametric {M}arkov chains: {PCTL} complexity and fraction-free
  {G}aussian elimination.
\newblock {\em Information and Computation}, 272:104504, 2020.

\bibitem{biedenkapp2022}
Andr{\'e} Biedenkapp, Nguyen Dang, Martin~S. Krejca, Frank Hutter, and Carola
  Doerr.
\newblock Theory-inspired parameter control benchmarks for dynamic algorithm
  configuration.
\newblock In {\em Proceedings of the Genetic and Evolutionary Computation
  Conference (GECCO 2022)}, pages 766--775. ACM, 2022.

\bibitem{buskulicDoerr2021}
Nathan Buskulic and Carola Doerr.
\newblock Maximizing drift is not optimal for solving {OneMax}.
\newblock {\em Evolutionary Computation}, 29(4):521--541, 2021.

\bibitem{buzdalovDoerr2020}
Maxim Buzdalov and Carola Doerr.
\newblock Optimal mutation rates for the {(1+$\lambda$)} {EA} on {OneMax}.
\newblock In {\em Parallel Problem Solving from Nature (PPSN XVI)}, volume
  12270 of {\em Lecture Notes in Computer Science}, pages 574--587. Springer,
  2020.

\bibitem{caoChen1997}
Xi-Ren Cao and Han-Fu Chen.
\newblock Perturbation realization, potentials, and sensitivity analysis of
  {M}arkov processes.
\newblock {\em IEEE Transactions on Automatic Control}, 42(10):1382--1393,
  1997.

\bibitem{caswell2019}
Hal Caswell.
\newblock {\em Sensitivity Analysis: Matrix Methods in Demography and Ecology}.
\newblock Springer, 2019.

\bibitem{chatterjee2025}
Krishnendu Chatterjee, Tim Quatmann, Maximilian Sch{\"a}ffeler, Maximilian
  Weininger, Tobias Winkler, and Daniel Zilken.
\newblock Fixed point certificates for reachability and expected rewards in
  {MDPs}.
\newblock In {\em Tools and Algorithms for the Construction and Analysis of
  Systems (TACAS 2025)}, volume 15697 of {\em Lecture Notes in Computer
  Science}, pages 130--151. Springer, 2025.

\bibitem{chen2023}
Deyao Chen, Maxim Buzdalov, Carola Doerr, and Nguyen Dang.
\newblock Using automated algorithm configuration for parameter control.
\newblock In {\em Proceedings of the 17th ACM/SIGEVO Conference on Foundations
  of Genetic Algorithms (FOGA 2023)}, pages 38--49. ACM, 2023.

\bibitem{cswa2015}
Francisco Chicano, Andrew~M. Sutton, L.~Darrell Whitley, and Enrique Alba.
\newblock Fitness probability distribution of bit-flip mutation.
\newblock {\em Evolutionary Computation}, 23(2):217--248, 2015.

\bibitem{ddy2020}
Benjamin Doerr, Carola Doerr, and Jing Yang.
\newblock Optimal parameter choices via precise black-box analysis.
\newblock {\em Theoretical Computer Science}, 801:1--34, 2020.

\bibitem{doerrGoldberg2013}
Benjamin Doerr and Leslie~Ann Goldberg.
\newblock Adaptive drift analysis.
\newblock {\em Algorithmica}, 65(1):224--250, 2013.

\bibitem{djw2012md}
Benjamin Doerr, Daniel Johannsen, and Carola Winzen.
\newblock Multiplicative drift analysis.
\newblock {\em Algorithmica}, 64(4):673--697, 2012.

\bibitem{djw2012nonex}
Benjamin Doerr, Daniel Johannsen, and Carola Winzen.
\newblock Non-existence of linear universal drift functions.
\newblock {\em Theoretical Computer Science}, 436:71--86, 2012.

\bibitem{doerrKelley2024}
Benjamin Doerr and Andrew~James Kelley.
\newblock Fourier analysis meets runtime analysis: Precise runtimes on
  plateaus.
\newblock {\em Algorithmica}, 86(8):2479--2518, 2024.

\bibitem{doerrKoetzing}
Benjamin Doerr and Timo K{\"o}tzing.
\newblock Lower bounds from fitness levels made easy.
\newblock {\em Algorithmica}, 86(2):367--395, 2024.

\bibitem{djw2002}
Stefan Droste, Thomas Jansen, and Ingo Wegener.
\newblock On the analysis of the {(1+1)} evolutionary algorithm.
\newblock {\em Theoretical Computer Science}, 276(1--2):51--81, 2002.

\bibitem{gks1999}
Josselin Garnier, Leila Kallel, and Marc Schoenauer.
\newblock Rigorous hitting times for binary mutations.
\newblock {\em Evolutionary Computation}, 7(2):173--203, 1999.

\bibitem{giessenWitt2018}
Christian Gie{\ss}en and Carsten Witt.
\newblock Optimal mutation rates for the {(1+$\lambda$)} {EA} on {OneMax}
  through asymptotically tight drift analysis.
\newblock {\em Algorithmica}, 80(5):1710--1731, 2018.

\bibitem{glynn1990}
Peter~W. Glynn.
\newblock Likelihood ratio gradient estimation for stochastic systems.
\newblock {\em Communications of the ACM}, 33(10):75--84, 1990.

\bibitem{golubMeyer1986}
Gene~H. Golub and Carl~D. Meyer.
\newblock Using the {QR} factorization and group inversion to compute,
  differentiate, and estimate the sensitivity of stationary probabilities for
  {M}arkov chains.
\newblock {\em SIAM Journal on Algebraic and Discrete Methods}, 7(2):273--281,
  1986.

\bibitem{greensmith2004}
Evan Greensmith, Peter~L. Bartlett, and Jonathan Baxter.
\newblock Variance reduction techniques for gradient estimates in reinforcement
  learning.
\newblock {\em Journal of Machine Learning Research}, 5:221--253, 2004.

\bibitem{heYao2003}
Jun He and Xin Yao.
\newblock Towards an analytic framework for analysing the computation time of
  evolutionary algorithms.
\newblock {\em Artificial Intelligence}, 145(1--2):59--97, 2003.

\bibitem{heck2022}
Linus Heck, Jip Spel, Sebastian Junges, Joshua Moerman, and Joost-Pieter
  Katoen.
\newblock Gradient-descent for randomized controllers under partial
  observability.
\newblock In {\em Verification, Model Checking, and Abstract Interpretation
  (VMCAI 2022)}, volume 13182 of {\em Lecture Notes in Computer Science}, pages
  127--150. Springer, 2022.

\bibitem{storm2022}
Christian Hensel, Sebastian Junges, Joost-Pieter Katoen, Tim Quatmann, and
  Matthias Volk.
\newblock The probabilistic model checker {Storm}.
\newblock {\em International Journal on Software Tools for Technology
  Transfer}, 24(4):589--610, 2022.

\bibitem{hwang2018}
Hsien-Kuei Hwang, Alois Panholzer, Nicolas Rolin, Tsung-Hsi Tsai, and Wei-Mei
  Chen.
\newblock Probabilistic analysis of the {(1+1)}-evolutionary algorithm.
\newblock {\em Evolutionary Computation}, 26(2):299--345, 2018.

\bibitem{jaegerskuepper2011}
Jens J{\"a}gersk{\"u}pper.
\newblock Combining {M}arkov-chain analysis and drift analysis.
\newblock {\em Algorithmica}, 59(3):409--424, 2011.

\bibitem{klls2025}
Marc Kaufmann, Maxime Larcher, Johannes Lengler, and Oliver Sieberling.
\newblock Hardest monotone functions for evolutionary algorithms.
\newblock {\em SN Computer Science}, 6(5):512, 2025.
\newblock Conference version in Proc.\ EvoCOP 2024, LNCS 14632, pp.~146--161.

\bibitem{kllz2025}
Marc Kaufmann, Maxime Larcher, Johannes Lengler, and Xun Zou.
\newblock {OneMax} is not the easiest function for fitness improvements.
\newblock {\em Evolutionary Computation}, 33(1):27--54, 2025.
\newblock Conference version in Proc.\ EvoCOP 2023, LNCS, pp.~162--178.

\bibitem{kemenySnell1960}
John~G. Kemeny and J.~Laurie Snell.
\newblock {\em Finite {M}arkov Chains}.
\newblock Van Nostrand, 1960.

\bibitem{margulis1974}
Grigori~A. Margulis.
\newblock Probabilistic characteristics of graphs with large connectivity.
\newblock {\em Problemy Peredachi Informatsii}, 10(2):101--108, 1974.

\bibitem{muehlenbein1992}
Heinz M{\"u}hlenbein.
\newblock How genetic algorithms really work: Mutation and hillclimbing.
\newblock In {\em Parallel Problem Solving from Nature (PPSN II)}, pages
  15--26. Elsevier, 1992.

\bibitem{odonnell2014}
Ryan O'Donnell.
\newblock {\em Analysis of {B}oolean Functions}.
\newblock Cambridge University Press, 2014.

\bibitem{rump2010}
Siegfried~M. Rump.
\newblock Verification methods: Rigorous results using floating-point
  arithmetic.
\newblock {\em Acta Numerica}, 19:287--449, 2010.

\bibitem{russo1981}
Lucio Russo.
\newblock On the critical percolation probabilities.
\newblock {\em Zeitschrift f{\"u}r Wahrscheinlichkeitstheorie und verwandte
  Gebiete}, 56(2):229--237, 1981.

\bibitem{schweitzer1968}
Paul~J. Schweitzer.
\newblock Perturbation theory and finite {M}arkov chains.
\newblock {\em Journal of Applied Probability}, 5(2):401--413, 1968.

\bibitem{sudholt2013}
Dirk Sudholt.
\newblock A new method for lower bounds on the running time of evolutionary
  algorithms.
\newblock {\em Theoretical Computer Science}, 499:92--110, 2013.

\bibitem{witt2013}
Carsten Witt.
\newblock Tight bounds on the optimization time of a randomized search
  heuristic on linear functions.
\newblock {\em Combinatorics, Probability and Computing}, 22(2):294--318, 2013.

\end{thebibliography}

\end{document}